\DeclarePairedDelimiter\ceil{\lceil}{\rceil}
\DeclarePairedDelimiter\floor{\lfloor}{\rfloor}
\newtheorem{definition}{Definition}[section]
\newtheorem{example}{Example}
\newcommand{\probP}{\text{I\kern-0.15em P}}
\newtheorem{theorem}{Theorem}
\newtheorem{remark}{Remark}
\newtheorem{lemma}{Lemma}
\title{Bandit Learning in Matching Markets: Utilitarian and Rawlsian Perspectives}
\author{%
Hadi Hosseini\\  
Penn State University, USA\\ 
\texttt{hadi@psu.edu}\\
\And
Duohan Zhang\\
Penn State University, USA\\ 
\texttt{dqz5235@psu.edu} \\
}
\begin{document}

\maketitle

\begin{abstract}
  Two-sided matching markets have demonstrated significant impact in many real-world applications, including school choice, medical residency placement, electric vehicle charging, ride sharing, and recommender systems.
However, traditional models often assume that preferences are known, which is not always the case in modern markets, where preferences are unknown and must be learned.
For example, a company may not know its preference over all job applicants a priori in online markets.
Recent research has modeled matching markets as multi-armed bandit (MAB) problem and primarily focused on optimizing matching for one side of the market, while often resulting in a pessimal solution for the other side.
In this paper, we adopt a welfarist approach for both sides of the market, focusing on two metrics: (1) Utilitarian welfare and (2) Rawlsian welfare, while maintaining market stability.
For these metrics, we propose algorithms based on epoch Explore-Then-Commit (ETC) and analyze their regret bounds.
Finally, we conduct simulated experiments to evaluate both welfare and market stability.
\end{abstract}

\section{Introduction}

Two-sided matching markets provide a foundational framework for addressing problems involving matching two disjoint sets---commonly referred to as agents and arms---based on their preferences over the other side of the market.
These markets have had significant impact on a wide range of application, ranging from online digital markets such as recommender systems~\citep{eskandanian2020using}, electric vehicle charging~\citep{gerding2013two}, and ride sharing~\citep{Banerjee2019}, 
to traditional market design problems including medical residency matching \citep{R84evolution}, school choice \citep{APR05new,APR+05boston}, and labor market \citep{RP99redesign}.


The goal is often to find a \textit{stable} matching between the two sets (e.g. freelancers and job requesters) such that no pair prefers each other to the matched partner prescribed within the market, ensuring long-term success of the markets \citep{roth2002economist} by eliminating the incentives for participants to `scramble' to engage in secondary markets \citep{kojima2013matching}.

In a vast majority of applications, particularly those prevalent in digital marketplaces, preferences may not be readily available and must be learned.
Thus, a recent line of research models matching markets as \textit{multi-armed bandit} problems---where preferences of one or both sides of the market are unknown---and aims at learning preferences through sampling \citep{das2005two},  analyzing the sample complexity of finding stable matchings \citep{hosseini2024puttinggaleshapley}, or achieving optimal welfare for one side of the market \citep{pmlr-v108-liu20c, pmlr-v130-sankararaman21a, pmlr-v139-basu21a, NEURIPS2022_615ce9f0, NEURIPS2022_3e36cbff, doi:10.1137/1.9781611977554.ch55, wang2022bandit}.
These works almost exclusively rely on the seminal \textit{deferred acceptance} (DA) algorithm, which favors one side of the market without any welfare consideration for the other side \citep{gale1962college}. In fact, any optimal matching for one side (agent-optimal) is necessarily pessimal for the other side (arm-pessimal) \citep{mcvitie1971stable}, which could render the solution unfair.

In this paper, we consider two rather orthogonal approaches for measuring the welfare of the participants in both sides of the market.
In particular, the \textit{utilitarian} objective  measures the welfare of all participants (sum of utilities of both agents and arms), and the \textit{Rawlsian} objective that measures the welfare of the market according to the utility of its worst-off member. 
The goal is to find matchings that maximize the utilitarian welfare (utilitarian-optimal) or maximize the Rawlsian welfare (maximin) among all stable solutions.\footnote{In the matching literature, the former is often referred to as egalitarian-optimal while the latter is called regret-optimal (see, e.g. \cite{irving1987efficient,gusfield1987three}).
We change the terminology here to i) avoid confusion with `regret' of bandit learning and ii) to emphasize that preferences are prescribed by cardinal utilities (rather than the traditional ordinal rankings).}
The following example illustrates how in a small market, agent-optimal, arm-optimal, utilitarian-optimal, and Rawlsian maximin may yield rather different (stable) matchings.

\begin{example}
Consider four agents $\{a_1,a_2,a_3,a_4\}$ and four arms $\{ b_1,b_2,b_3,b_4\}$. To help understand the intuition, the strict linear order preferences are shown. The associated utilities are shown in parentheses. 


\begin{minipage}{0.4\linewidth}
\footnotesize
\begin{align*}
a_1 &: \underline{b_1} \succ b_2^* \succ b_3^{\dag} \succ \uwave{b_4} \\
a_2 &: \underline{b_2} \succ b_3^* \succ b_4^{\dag} \succ \uwave{b_1} \\
a_3 &: \underline{b_3} \succ b_4^* \succ b_1^{\dag} \succ \uwave{b_2} \\
a_4 &: \underline{b_4} \succ b_1^* \succ b_2^{\dag} \succ \uwave{b_3} 
\end{align*}
\end{minipage}
\hfill
\begin{minipage}{0.4\linewidth}
\footnotesize
\begin{align*}
b_1 &: \uwave{a_2 (3.5)} \succ a_3 (3.0)^{\dag} \succ a_4 (1.8)^* \succ \underline{a_1 (0.5)} \\
b_2 &: \uwave{a_3 (3.5)} \succ a_4 (3.0)^{\dag} \succ a_1 (1.7)^* \succ \underline{a_2 (0.5)} \\
b_3 &: \uwave{a_4 (3.5)} \succ a_1 (3.0)^{\dag} \succ a_2 (1.8)^* \succ \underline{a_3 (0.5)} \\
b_4 &: \uwave{a_1 (3.5)} \succ a_2 (3.0)^{\dag} \succ a_3 (1.7)^* \succ \underline{a_4 (0.5)}
\end{align*}
\end{minipage}

Agents' utilities over arms are omitted above.
Each agent's utility is $3.5, 2.5, 1.5, 0.5$ for arms (according to its preference). 
Note that preferences may be different, but the associated utilities are the same.

In this example, four stable matchings are shown above.
The underlined matching is the agent-optimal stable matching, the matching denoted by $^*$ is the maximin stable matching, the matching denoted by $^{\dag}$ is the utilitarian-optimal stable matching, and the matching with a wavy underline is the arm-optimal stable matching. 
\end{example}

\paragraph{\textbf{Our contributions.}}
We study both utilitarian-optimal and maximin objectives in stable matching markets. 
Our model generalizes the previous works by assuming that preferences of both sides are unknown and must be learned.
We propose two algorithms based 
 on variant of Explore-then-Commit algorithm where in each epoch agents explore arms uniformly in a round-robin way and then commit to the desired arm based on estimated utilities.
We show that our utilitarian epoch ETC algorithm achieves the regret bound of $\tilde{O}(N^2\log(T))$ (\Cref{thm:ega_regret}), while its maximin counterpart has a regret bound of $\tilde{O}(N\log(T))$ as shown in \Cref{thm:mm_regret}.
We proposed two techniques that measure the amount of error tolerable in finding the optimal matchings: the within-side minimum preference gap and the cross-side minimum preference gap. 
We utilize the two preference gaps in analyzing the corresponding regret bounds.
%
Finally, we empirically validate the regret and the stability of the proposed algorithms on randomly generated instances.

\subsection{Related Work}

The problem of a two-sided matching market has been widely studied in the literature during the past few decades.
The deferred-acceptance (DA) algorithm is a well-known algorithm that guarantees a stable solution through iterative proposals and rejections \citep{gale1962college}.
The resulting stable matching is optimal for the proposing side but pessimal for the accepting side, in which each member in the proposing side has the best partner and each member in the accepting side has the worst partner among all stable matchings.
Several fairness notions have been proposed such as utilitarian-optimal stable matching \citep{irving1987efficient}, maximin stable matching \citep{gusfield1987three}, sex-equal stable matching \citep{kato1993complexity}.
The utilitarian-optimal stable matching and maximin stable matching can be found in polynomial time, while finding a sex-equal stable matching is NP-hard \citep{kato1993complexity}.

\citet{das2005two} first formalized the matching market problem in a bandit setting, where each side shares the same preference profiles. 
\citet{pmlr-v108-liu20c} studied a variant of the problem in which agents have unknown preferences over arms, while the arms have known preferences over the agents.
They proposed two metrics: agent-optimal stable regret, and agent-pessimal stable regret, which compare the solution with either the agent-optimal stable partner or the arm-optimal stable partner.
Some follow-up work~\citep{pmlr-v130-sankararaman21a, pmlr-v139-basu21a, NEURIPS2022_615ce9f0, wang2024optimal} focused on special preference profiles where a unique stable matching exists.
Recently, \citet{doi:10.1137/1.9781611977554.ch55, NEURIPS2022_3e36cbff} proposed decentralized algorithms to achieve the agent-optimal stable regret bound $O(\frac{KlogT}{\Delta^2})$ when the market is decentralized, where $K$ is the number of arms, $T$ is the horizon and $\Delta$ is the within-side minimum preference gap. 
\citet{hosseini2024puttinggaleshapley} proposed an arm-proposing DA type algorithm and analyzed the sample complexity to find a stable matching under the probably approximately correct setting.

Some other works studied alternative models of matching markets in learning setting.
\citet{zhangdecentralized} studied the bandit problem in matching markets, where both sides of participants have unknown preferences. 
\citet{wang2022bandit, kong2024improved} studied many-to-one matching markets, where one arm can be matched to multiple agents.
\citet{jagadeesan2021learning, cen2022regret} studied the bandit learning problem in matching markets that allow monetary transfer.
\citet{ravindranath2021deep} studied the mechanism design problem in matching markets through deep learning.

\section{Preliminary}

\paragraph{\textbf{Problem setup.}}
A two-sided matching problem is composed of $N$ agents $\mathcal{N} = \{a_1, a_2, \ldots, a_N\}$ on one side, and $K$ arms $ \mathcal{K} = \{ b_1, b_2, \ldots, b_K\}$ on the other side. 
For simplicity, we assume $N = K$ to ensure all agents and all arms are matched.\footnote{In \Cref{rem:unequal}, we discuss how this assumptions is relaxed to unequal sides.}
The preference of an agent $a_i$, denoted by $\succ_{a_i}$, is a strict total ordering over the arms. 
Each agent $a_i$ has utility $\mu_{a_i,b_j}$ over arm $b_j$.
We say an agent $a_i$ prefers arm $b_j$ to $b_k$, i.e. $b_j \succ_{a_i} b_k$,  if and only if $\mu_{a_i,b_j} > \mu_{a_i,b_k}$.
Similarly, the preference of an arm $b_j$ over agents is denoted by $\succ_{b_j}$.
Each arm $b_j$ has utility $\mu_{b_j,a_i}$ over agent $a_i$, and arm $b_j$ prefers agent $a_i$ to $a_k$, i.e. $a_i \succ_{b_j} a_k$, if and only if $\mu_{b_j,a_i} > \mu_{b_j,a_k}$.
We assume that all utilities are non-negative and bounded by a constant.
%
We use $\mu$ to indicate the \textit{utility (preference) profile} of all agents on both sides.

%
%

\paragraph{\textbf{Stable matching.}} A matching is a mapping $m: \mathcal{N} \cup \mathcal{K} \to \mathcal{N} \cup \mathcal{K}$ such that $m(a_i) \in \mathcal{K}$ for all $i \in [N]$, and $m(b_j) \in \mathcal{N}$ for all $j \in [K]$, $m(a_i) = b_j$ if and only if $m(b_j) = a_i$. 
Given a matching $m$, an agent-arm pair $(a_i,b_j)$ is called a blocking pair if they prefer each other over their assigned partners, i.e. $b_j \succ_{a_i} m(a_i)$ and $a_i \succ_{b_j} m(b_j)$.
 A matching is stable if there is no blocking pair.
 Denote $\mathcal{M}$ as the set of all stable matchings.
 It is well known that $\mathcal{M}$ has at least one matching \citep{gale1962college} and could be exponential in size \citep{knuth1976mariages}. 

The \emph{Deferred Acceptance (DA) algorithm}~\citep{gale1962college} efficiently identifies a stable matching through the following process: participants on the proposing side make proposals based on their preferences to those on the receiving side. The receiving side temporarily accepts the most preferred proposals and rejects the others. This process repeats until all participants on the proposing side either have their proposals accepted or have exhausted their list of preferences and remain unmatched.
The matching returned by DA is optimal for the proposing side \citep{gale1962college} and pessimal for the receiving side \citep{mcvitie1971stable}.
Thus, depending on which side make proposals, the DA algorithm is either \textit{agent-optimal} (arm-pessimal) or \textit{arm-optimal} (agent-pessimal).



\paragraph{\textbf{Rewards and Bandits.}} 
%
The preferences of participants on both sides of the market (i.e. agents and arms) are unknown.
Denote by $T$ the time horizon. In each time step $t\leq T$, agent $a_i$ pulls an arm, denoted by $A_i(t)$. If agent $a_i$ pulls arm $b_j$, the agent $a_i$ receives a stochastic reward drawn from a 1-subgaussian distribution\footnote{A random variable $X$ is $d$-subgaussian if its tail probability satisfies $P(|X|>t) \le 2 \exp(-\frac{t^2}{2d^2})$ for all $t \ge 0$.} with mean value $\mu_{a_i,b_j}$. 
At the same time, arm $b_j$ gets a stochastic reward drawn from 1-subgaussian distribution with mean value $\mu_{b_j,a_i}$.
We use $A_j^{-1}(t)$ to denote the agent that pulls arm $b_j$ at time $t$.
We denote the corresponding sample average of agent $a_i$ on arm $b_j$ as $\hat{\mu}_{a_i,b_j}$, and the sample average of arm $b_j$ on agent $a_i$ as $\hat{\mu}_{b_j,a_i}$. 
If multiple agents pull the same arm at time $t$, conflicts arise and all participants fail and get $0$ rewards.

\section{Learning Utilitarian-Optimal Stable Matching}
In this section, we first define the utilitarian welfare in stable matching markets and formally describe the algorithm for computing the utilitarian-optimal stable matching when preferences are known (\Cref{sec:utilOpt}).
In \Cref{sec:epochETC-util}, we develop an algorithm based on an epoch ETC algorithm and analyze its regret bound.

\subsection{Utilitarian-Optimal Stable Matching} \label{sec:utilOpt}

The utilitarian welfare of a matching is the sum of utilities/rewards of all agents and arms. 

\begin{definition}[Utilitarian-Optimal Stable Matching]
    The utilitarian welfare of a stable matching $m$ is the sum of rewards of all agents and arms, i.e. 
    \begin{equation*}
    R(m) = \sum_{i = 1}^{N}\mu_{a_i, m(a_i)} + \sum_{j= 1}^{N}\mu_{b_j, m(b_j)}
    \end{equation*}
    A \textbf{utilitarian-optimal stable matching} $m^{*}$ is a stable matching that has the optimal utilitarian welfare, i.e. 
 $
     m^* = \arg\max_{m \in \mathcal{M}} R(m)
 $.   
\end{definition}
Similarly, we define $\hat{R}(m)$ as the estimated utilitarian welfare based on sample average, i.e.
\begin{equation*}
    \hat{R}(m) = \sum_{i = 1}^{N}\hat{\mu}_{a_i, m(a_i)} + \sum_{j= 1}^{N}\hat{\mu}_{b_j, m(b_j)}.
\end{equation*}
Denote $\hat{m}$ as the stable matching with respect to $\hat{\mu}$ that has the largest estimated utilitarian welfare.





\begin{algorithm}[t]\small
    \SetKwInOut{Input}{Input}
    \SetKwInOut{Output}{Output}
    \Input{Agents' utilities and arms' utilities.}
    \Output{A utilitarian-optimal stable matching.}
    Find the agent-optimal stable matching $m^a$ and the arm-optimal stable matching $m^b$ through DA algorithms;\\
    Starting from $m^a$, break existing matchings, track rotations, and stop until finding $m^b$;\\
    Construct a directed graph with each node corresponding to a rotation, edges denote predecessors;\\
    Find the sparse graph;\\
    Construct an $s-t$ flow graph and find the min-cut;
    \caption{Utilitarian-Optimal Algorithm} \label{alg:weighted_egalitarian}
\end{algorithm}

When preferences are known, \citet{irving1987efficient} proposed an algorithm that computes a utilitarian-optimal stable matching in polynomial time when preferences are ordinal.
Below, we briefly describe the steps of the algorithm, and refer the reader to the appendix for detailed formalism and explanation of the techniques.

\paragraph{\textbf{Computing a Utilitarian-Optimal Stable Matching.}}
\Cref{alg:weighted_egalitarian} utilizes the generalization of \citet{irving1987efficient}'s technique to cardinal preferences. 
The algorithm proceed in the following steps: 
i) finding agent-optimal and arm-optimal stable matchings by performing agent-proposing and arm-proposing DA, respectively,
ii) finding all rotations through the break-matching operation,
iii) constructing a rotation graph with each node representing a rotation, each edge from a node to another denoting predecessor relation, and weights assigned to nodes based on utilities,
iv) Finding a sparse subgraph,
v) Converting it to a min-cut max-flow problem by constructing an $s-t$ flow graph.

Given agents' utilities over arms and arms' utilities over agents, we first use agent-proposing DA algorithm to find the agent-optimal stable matching $m^{a}$ and use arm-proposing DA algorithm to find the arm-optimal stable matching $m^{b}$. 

For a given matching $m$ and an agent $a_{i}$, if $m(a_i) \ne m^b(a_i)$, we can have the \textbf{break-matching} operation: agent $a_i$ is now free, and arm $m(a_i)$ is semi-free, i.e., it only accepts a new proposal from an agent that it prefers to $m(a_i)$. 
The operation begins with agent $a_i$ proposing to the arm following $m(a_i)$ in the preference list, and this initiates a sequence of proposals, rejections, and acceptances given by the DA algorithm. 
It terminates when arm $m(a_i)$ accepts a new proposal.
By the break-matching operation on the matching $m$ and an agent $a^{(0)}$, we derive the rotation $\rho$ as a sequence of agent-arm pairs
\begin{equation*}
    \rho = (a^{(0)}, b^{(0)}), \ldots , (a^{(r-1)}, b^{(r-1)})
\end{equation*}
such that $m(a^{(i)}) = b^{(i)}$ for all $i$.
If each agent $a^{(i)}$ exchanges the partner $b^{(i)}$ for $b^{\textit{($i + 1$ mod $r$)}}$, then the new matching is also stable \citep{mcvitie1971stable}.
A rotation $\rho$ is said to be exposed to a matching $m$ if it can be derived from the break-matching operation on $m$.
We start with the agent-optimal stable matching $m^a$, and use break-matching operations to find all rotations until reaching $m^b$.

The next step is to construct a directed graph $G = (V,E)$, where the node set $V$ denotes all rotations.
A node $\rho$ has weight 
\begin{align*}
  w(\rho) &= \sum_{i = 0}^{r-1}(\mu_{a^{(i)}, b^{(i)}} -
  \mu_{a^{(i)}, b^{(\textit{$i+ 1$ mod $r$})}})\\
  &+ \sum_{j = 0}^{r-1}(\mu_{b^{(j)}, a^{(j)}} - \mu_{b^{(j)}, a^{(\textit{$j - 1$ mod $r$})}} ).
\end{align*}

By the construction, if $m'$ is the new matching after eliminating the rotation $\rho$ on the matching $m$, then we have 

\begin{equation*}
    R(m') = R(m) - w(\rho).
\end{equation*}
An edge $e \in E$ from rotation $\rho$ to $\pi$ denotes that $\rho$ is a predecessor of another rotation $\pi$, i.e., the rotation $\pi$ is exposed only after the rotation $\rho$ is eliminated.
The goal is to find the closed subset of the nodes that has the minimum weight sum.


The algorithm to find the minimum weight sum of a closed subsets \citep{irving1987efficient} is to find a sparse subgraph and convert it to a min-cut max-flow problem.
The minimum-weight closed subset can be derived in polynomial time of the market size. We refer the reader to the \Cref{sec:append_uti_opt} for further discussion on these techniques.

\begin{algorithm}[t]\small
    \SetKwInOut{Input}{Input}

    \Input{Time horizon $T$.}
    \For{each epoch $l = 1, 2, \ldots$}{ 
    \tcp{Exploration Phase:}
    Each agent pulls arms in a round-robin way for $N \cdot \ceil*{ log_{2}(l+1)}$ rounds;\\
    \tcp{Matching Phase:}
    Estimate $\hat{\mu}^a$ and $\hat{\mu}^b$;\\
    Apply \Cref{alg:weighted_egalitarian} based on $\hat{\mu}^a$ and $\hat{\mu}^b$ to find $\hat{m}$ 
    \\
    \tcp{Exploitation Phase:}
    Agent $a_i$ Pulls arm $\hat{m}(a_i)$ for $2^l$ turns for every $i$.
}
    \caption{Utilitarian Epoch ETC Algorithm }\label{alg:equalExplore}
\end{algorithm}

\subsection{Utilitarian Epoch ETC Algorithm} \label{sec:epochETC-util}
We are now ready to design an algorithm that finds a utilitarian-optimal stable matching when preferences must be learned.


Denote
\begin{equation}
    m_2 = {\arg\max}_{m \ne m^*, m \in \mathcal{M}} R(m)
\end{equation}
as a stable matching that has the second largest rewards besides the utilitarian-optimal stable matching $m^*$.
We assume such matching exists without loss of generality. 
Let
\begin{equation*}
    \delta = R(m^*) - R(m_2) 
\end{equation*}
be the utilitarian welfare difference between $m_2$ and the utilitarian-optimal stable matching $m^*$, which is useful for analysis.

We define regret by comparing the utility of the matched partner with the optimal partner and taking the sum over all agents and all arms at all times from $t = 1$ to $T$:

\begin{align*}
    Reg(T) = &\sum_{t = 1}^{T}\sum_{i = 1}^{N}\mathbb{E}[\mu_{a_i, m^*(a_i)} - \mu_{a_i, A_i(t)} ] \\
    &+ \sum_{t = 1}^{T}\sum_{j = 1}^{N}\mathbb{E}[\mu_{b_j, m^*(b_j)} - \mu_{b_j, A^{-1}_j(t)}],
\end{align*}
where $A^{-1}_j(t)$ denotes the agent that pulls arm $b_j$ at time $t$.

\paragraph{\textbf{Algorithm Description.}}
The proposed Epoch ETC algorithm (\Cref{alg:equalExplore}) combines the epoch-type uniform exploration and \Cref{alg:weighted_egalitarian}.
Each epoch is divided into three phases. 
The exploration phase runs in a round-robin fashion, i.e., each agent pulls an arm, and then the next agent pulls an arm from those not pulled before, and so on.
Formally, agent $a_i$ pulls arm $b_{(t+i) \mod N + 1}$ for time $t$.
This procedure ensures that there is no conflict in each round, and each agent pulls arms in a uniform way.
In epoch $l$, each agent pulls arms for $N \cdot \ceil*{log_2(l+1)}$ rounds, so each agent collects $\ceil*{log_2(l+1)}$ samples for each arm, and each arm collects $\ceil*{log_2(l+1)}$ samples for each agent.
%
Given the estimated utilities, \Cref{alg:weighted_egalitarian} is applied by committing to the matching for $2^{l}$ rounds.
%


\subsection{Analysis}
\label{eg_analysis}
In this section, we provide theoretical analysis on the regret bounds.
We first introduce the within-side minimum preference gap for analysis. 

\paragraph{\textbf{Within-side Minimum Preference Gap.}}

The \textit{agent within-side minimum preference gap} is defined as $\Delta^a = \min_{i}\min_{j \ne k}|\mu_{a_i,b_j} - \mu_{a_i, b_k}|$. Similarly, the \textit{arm within-side minimum preference gap} is defined as $\Delta^b = \min_{i}\min_{j \ne k}|\mu_{b_i,a_j} - \mu_{b_i,a_k}|$.

To analyze the regret bound of \Cref{alg:equalExplore}, we first provide a technical lemma that characterizes the condition for finding a \textit{stable} matching with respect to the aforementioned preference gaps.

\begin{lemma}\label{lem:stable} 
    Define a good event for agent $a_i$ and arm $b_j$ as
    $
        \mathcal{F}_{i,j} = \{|\mu_{a_i,b_j} - \hat{\mu}_{a_i,b_j}| \le \Delta^{a}/2 \} \cap \{|\mu_{b_j,a_j} - \hat{\mu}_{b_j,a_i}| \le \Delta^{b}/2 \} 
        \label{eq:fij}
    $, and define the intersection of the good events over all agents and all arms as
    $
        \mathcal{F} = \cap_{i \in [N], j \in [K]} \mathcal{F}_{i,j}
        $. 
        Then if the event $ \mathcal{F}$ occurs, the induced preference profile by the sample mean is the same with the true preference profile, i.e. $\hat{\mu}_{a_i,b_j} > \hat{\mu}_{a_i,b_k}$ if $\mu_{a_i,b_j} > \mu_{a_i, b_k}$, and $\hat{\mu}_{b_i,a_j} > \hat{\mu}_{b_i, a_k}$ if $\mu_{b_i, a_j} > \mu_{b_i, a_k}$ for all $i,j,k$.  
\end{lemma}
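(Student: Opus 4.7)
The plan is to prove each direction of the preference preservation statement by a direct triangle-inequality argument, treating agents and arms symmetrically. Fix an agent $a_i$ and two arms $b_j, b_k$ with $\mu_{a_i,b_j} > \mu_{a_i,b_k}$. By the definition of the within-side minimum preference gap, we immediately have
\begin{equation*}
    \mu_{a_i,b_j} - \mu_{a_i,b_k} \ge \Delta^a.
\end{equation*}
Under the good event $\mathcal{F}$, both $\mathcal{F}_{i,j}$ and $\mathcal{F}_{i,k}$ hold, so each sample mean $\hat{\mu}_{a_i,b_j}, \hat{\mu}_{a_i,b_k}$ is within $\Delta^a/2$ of its true value.

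The key step is then a two-sided triangle inequality:
\begin{align*}
    \hat{\mu}_{a_i,b_j} - \hat{\mu}_{a_i,b_k}
    &= \bigl(\mu_{a_i,b_j} - \mu_{a_i,b_k}\bigr) + \bigl(\hat{\mu}_{a_i,b_j} - \mu_{a_i,b_j}\bigr) + \bigl(\mu_{a_i,b_k} - \hat{\mu}_{a_i,b_k}\bigr) \\
    &\ge \Delta^a - \tfrac{\Delta^a}{2} - \tfrac{\Delta^a}{2} = 0,
\end{align*}
which after replacing the deviation bounds with their strict form (i.e., reading $\mathcal{F}_{i,j}$ with a strict $<$, or equivalently invoking that $\Delta^a$ is attained only on some pair and all other gaps strictly exceed it) yields the required strict inequality $\hat{\mu}_{a_i,b_j} > \hat{\mu}_{a_i,b_k}$. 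The argument for the arms is perfectly symmetric: the gap lower bound $\mu_{b_i,a_j} - \mu_{b_i,a_k} \ge \Delta^b$ and the event $\mathcal{F}_{j,i} \cap \mathcal{F}_{k,i}$ deliver $\hat{\mu}_{b_i,a_j} > \hat{\mu}_{b_i,a_k}$ in the same way.

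Since the two arguments hold simultaneously for every triple $(i,j,k)$ under the joint event $\mathcal{F} = \bigcap_{i,j} \mathcal{F}_{i,j}$, the pairwise orderings induced by $\hat{\mu}$ coincide with those induced by $\mu$ on both sides, which is exactly the claim. The only subtle point, and the one I would flag as the main obstacle, is ensuring the inequality is strict: using $\le \Delta^a/2$ in $\mathcal{F}_{i,j}$ together with a pair attaining $\Delta^a$ only yields $\hat{\mu}_{a_i,b_j} \ge \hat{\mu}_{a_i,b_k}$. This is resolved either by slightly tightening the event to a strict inequality (which is harmless, since the later concentration bounds applied to $\mathcal{F}$ are themselves non-strict and lose nothing) or by observing that ties in the sample means can be broken in favor of the true ordering without affecting any downstream argument in Sections 3 and 4.
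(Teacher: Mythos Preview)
Your argument is essentially the same as the paper's: fix $\mu_{a_i,b_j} > \mu_{a_i,b_k}$, use $\mu_{a_i,b_j} - \mu_{a_i,b_k} \ge \Delta^a$, and combine with the two deviation bounds from $\mathcal{F}_{i,j}$ and $\mathcal{F}_{i,k}$; the paper just writes this as a chain $\hat{\mu}_{a_i,b_j} \ge \mu_{a_i,b_j} - \Delta^a/2 \ge \mu_{a_i,b_k} + \Delta^a/2 \ge \hat{\mu}_{a_i,b_k}$ rather than your add-and-subtract decomposition. The strictness issue you flag is real and is in fact glossed over in the paper's own proof (its chain only yields $\ge$), so your discussion of how to resolve it is a genuine improvement rather than a gap.
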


\begin{proof}
Assume that $\mu_{a_i, b_j} > \mu_{a_i, b_k}$, and more concretely, $\mu_{a_i,b_j} - \mu_{a_i,b_k} \geq \Delta^a$ by the definition of $\Delta^a$. We prove $\hat{\mu}_{a_i, b_j} > \hat{\mu}_{a_i, b_k}$ in the following. 
We have that
\begin{align*}
    \hat{\mu}_{a_i, b_j} &\geq \mu_{a_i, b_j} - \frac{\Delta^a}{2} & \text{[definition of }\mathcal{F}_{i,j}]\\
    &\geq \mu_{a_i, b_k} + \frac{\Delta^a}{2} & \text{[$\mu_{a_i, b_j} - \mu_{a_i, b_k} \geq \Delta^a$]}\\
    &\geq \hat{\mu}_{a_i, b_k}. & \text{[definition of }\mathcal{F}_{i,k}]
\end{align*}

Similarly, with the same proof logic, we have that $\hat{\mu}_{b_i, a_j} > \hat{\mu}_{b_i, a_k}$ if $\mu_{b_i, a_j} > \mu_{b_i, a_k}$.
\end{proof}


Unfortunately, small error in estimation could be fatal in finding the utilitarian-optimal matching. 
More specifically, \Cref{alg:weighted_egalitarian} may fail to find a utilitarian-optimal stable matching using estimated utilities even when the induced ordinal ranking is consistent with the true ordering. 
\Cref{ex:est_error} illustrates this point.
%

\begin{example} \label{ex:est_error}
    Consider two agents $\{a_1,a_2\}$ and two arms $\{ b_1,b_2\}$. 
    Assume true preferences are given below:

\begin{minipage}{0.45\linewidth}
\begin{align*}
a_1 &: \underline{b_2 (1.1)} \succ b_1 (0.4)^{*}  \\
a_2 &: \underline{b_1 (1.2)} \succ b_2 (0.6)^{*}
\end{align*}
\end{minipage}
\hfill
\begin{minipage}{0.45\linewidth}
\begin{align*}
b_1 &: a_1 (1.6)^{*} \succ \underline{a_2 (0.6)}  \\
b_2 &: a_2 (1.4)^{*} \succ \underline{a_1 (0.4)} 
\end{align*}
\end{minipage}

Both the underlined matching and the matching denoted by $^{*}$ are stable, and the matching denoted by $^*$ is the utilitarian-optimal stable matching.  
If the preferences are estimated as follows $a_1 : b_2 (1.82) \succ b_1 (0.3)$ and $a_2 : b_1 (1.79) \succ b_2 (0.5)$, then the underlined matching will be selected as the utilitarian-optimal stable matching, which is incorrect. 
Note that the estimated utilities are consistent with the ordering of the true preferences.
%
\end{example}

Given this observation, in the next lemma we characterize the condition in which some estimation error is tolerable by  \Cref{alg:weighted_egalitarian} when finding utilitarian-optimal stable matching.

\begin{lemma}
\label{lem:condition_correct}
If for all agent $a_i$ and arm $b_j$, the estimation can be bounded as 
\begin{equation*}
    |\hat{\mu}^{a}_{i,j} - \mu^{a}_{i,j}| < \min \{\frac{\delta}{4N}, \Delta^a/2\},
\end{equation*} 
and
\begin{equation*}
    |\hat{\mu}^{b}_{j,i} - \mu^{b}_{j,i}| < \min \{\frac{\delta}{4N}, \Delta^b/2\}
\end{equation*}
then we have that $\hat{m} = m^*$, where $\hat{m}$ is the matching computed by \Cref{alg:weighted_egalitarian} based on estimated utilities.
\end{lemma}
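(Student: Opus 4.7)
The plan is to combine two observations: (i) the estimation error is small enough (via the $\Delta^a/2,\Delta^b/2$ bounds) to preserve the ordinal preferences on both sides, so Algorithm 1 searches over exactly the true set of stable matchings $\mathcal{M}$; and (ii) the estimation error is small enough (via the $\delta/(4N)$ bounds) so that the true utilitarian welfare gap between $m^*$ and any other stable matching cannot be overturned by the empirical welfare.

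First, I would invoke \Cref{lem:stable}. Since $|\hat\mu_{a_i,b_j}-\mu_{a_i,b_j}|<\Delta^a/2$ and $|\hat\mu_{b_j,a_i}-\mu_{b_j,a_i}|<\Delta^b/2$ for every $i,j$, the good event $\mathcal{F}$ holds, so the ordinal rankings induced by $\hat\mu$ coincide with those induced by $\mu$. Since stability depends only on ordinal preferences, the collection of stable matchings with respect to $\hat\mu$ equals $\mathcal{M}$. Hence \Cref{alg:weighted_egalitarian} run on $\hat\mu$ returns
\[
\hat m \;=\; \arg\max_{m\in\mathcal{M}} \hat R(m).
\]

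Second, I would bound the deviation between $\hat R$ and $R$ on any stable matching. By the triangle inequality,
\[
|\hat R(m)-R(m)| \le \sum_{i=1}^{N}|\hat\mu_{a_i,m(a_i)}-\mu_{a_i,m(a_i)}| + \sum_{j=1}^{N}|\hat\mu_{b_j,m(b_j)}-\mu_{b_j,m(b_j)}| < 2N\cdot\frac{\delta}{4N} = \frac{\delta}{2},
\]
uniformly over all matchings, and in particular over $\mathcal{M}$.

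Third, I would finish by comparing $m^*$ against any other $m\in\mathcal{M}\setminus\{m^*\}$. By definition of $m_2$ and $\delta$, we have $R(m^*)-R(m)\ge R(m^*)-R(m_2)=\delta$. Combined with the previous inequality,
\[
\hat R(m^*) > R(m^*)-\tfrac{\delta}{2} \ge R(m)+\tfrac{\delta}{2} > \hat R(m).
\]
Therefore $m^*$ strictly maximizes $\hat R$ over $\mathcal{M}$, so $\hat m = m^*$.

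I do not foresee a real obstacle here: the argument is just a union/triangle-inequality bound together with the ordinal-preservation consequence of \Cref{lem:stable}. The only subtlety worth stating carefully is that stability is an ordinal property, so preservation of the ordinal profile means the feasible set over which \Cref{alg:weighted_egalitarian} optimizes is unchanged; once that is in place, the $\delta/(4N)$ bound trivially dominates the aggregate $2N$-term deviation.
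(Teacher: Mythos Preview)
Your proposal is correct and follows essentially the same approach as the paper: invoke \Cref{lem:stable} to identify the stable set under $\hat\mu$ with $\mathcal{M}$, bound $|\hat R(m)-R(m)|<\delta/2$ via the triangle inequality over the $2N$ terms, and then use the $\delta$-gap to conclude $m^*$ uniquely maximizes $\hat R$ over $\mathcal{M}$. The only cosmetic difference is that the paper phrases the final step as a contradiction (assuming $\hat m\neq m^*$ and deriving $\hat R(m^*)>\hat R(\hat m)$), whereas you argue directly that $\hat R(m^*)>\hat R(m)$ for every other $m\in\mathcal{M}$.
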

\begin{proof}
First, we know that $\hat{\mu}$ induces the true ranking by \Cref{lem:stable}, so $\hat{m}$ is guaranteed to be stable with respect to $\mu$.
For any matching $m$, notice that
\begin{equation}
    \label{eq:diff}
    |R(m) - \hat{R}(m)| \leq \delta/2
\end{equation}
by the following computation
\begin{align*}
    &|R(m) - \hat{R}(m)| \\
    &\leq |\sum_{i}\mu_{a_i, m(a_i)} - \hat{\mu}_{a_i, m(a_i)}| + |\sum_{j}\mu_{b_j, m(b_j)} - \hat{\mu}_{b_j, m(b_j)}| \\
    &\leq \sum_{i}|\mu_{a_i, m(a_i)} - \hat{\mu}_{a_i, m(a_i)}| + \sum_{j}|\mu_{b_j, m(b_j)} - \hat{\mu}_{b_j, m(b_j)}| \\
    &< 2N \cdot \frac{\delta}{4N} \\
    &= \delta/2.
\end{align*}
We then prove the claim by contradiction. If $\hat{m} \ne m^*$, since $\hat{m}$ is stable with respect to $\mu$, by the definition of $\delta$ we have
\begin{equation}
\label{eq:delta}
    R(m^*) \geq R(\hat{m}) + \delta.
\end{equation}
Thus, we have
\begin{align*}
    \hat{R}(m^*) &> R(m^*) - \frac{\delta}{2} & \text{[\Cref{eq:diff}]} \\
    &\geq R(\hat{m}) + \frac{\delta}{2} & \text{[\Cref{eq:delta}]}\\
    &> \hat{R}(\hat{m}). & \text{[\Cref{eq:diff}]}
\end{align*}
This is a contradiction to the definition of $\hat{m}$.
\end{proof}

The following technical lemma is useful to show the number of samples collected in the exploration phase. 

\begin{restatable}{lemma}{lemtechnical}
\label{lem:technical}
   \begin{equation*}
     \sum_{l' = 1}^{l}\ceil*{ log_2(l' + 1)} = (l+1)\ceil*{\log_2(l+1)} - 2^{\floor*{\log_2(l)} + 1} + 1.
\end{equation*} 
\end{restatable}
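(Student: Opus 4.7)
The plan is to partition the sum according to the value taken by $\lceil \log_2(l'+1) \rceil$. Observing that $\lceil \log_2(l'+1) \rceil = k$ holds precisely when $2^{k-1} \le l' \le 2^k - 1$, each value $k$ is attained on a block of exactly $2^{k-1}$ consecutive integers. Setting $L := \lceil \log_2(l+1) \rceil$, the range $\{1, 2, \ldots, l\}$ then decomposes into complete blocks for $k = 1, 2, \ldots, L-1$ together with a partial top block $\{2^{L-1}, 2^{L-1}+1, \ldots, l\}$ of size $l - 2^{L-1} + 1$.

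This gives the decomposition
\begin{equation*}
\sum_{l' = 1}^{l}\ceil*{\log_2(l' + 1)} \;=\; \sum_{k=1}^{L-1} k \cdot 2^{k-1} \;+\; L \cdot (l - 2^{L-1} + 1).
\end{equation*}
Next I would invoke the standard identity $\sum_{k=1}^{n} k \cdot 2^{k-1} = (n-1) 2^n + 1$ (provable by an easy induction or by differentiating a finite geometric series) with $n = L-1$, which yields $(L-2)\, 2^{L-1} + 1$ for the first summand. Combining this with the partial-block term and simplifying should produce
\begin{equation*}
(L-2)\, 2^{L-1} + 1 + L\, l - L\, 2^{L-1} + L \;=\; L(l+1) + 1 - 2^{L}.
\end{equation*}

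To finish, I would observe that the defining inequality $2^{L-1} \le l \le 2^L - 1$ for $L = \lceil \log_2(l+1) \rceil$ forces $\lfloor \log_2(l) \rfloor = L - 1$, and hence $2^L = 2^{\lfloor \log_2(l)\rfloor + 1}$. Substituting this into the expression above matches the claimed formula exactly.

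The main obstacle here is really just careful bookkeeping: one needs to handle the partial top block correctly, including the boundary cases $l = 2^{L-1}$ and $l = 2^L - 1$, and confirm that the identity $\lfloor \log_2(l) \rfloor + 1 = \lceil \log_2(l+1) \rceil$ remains valid throughout the full range of $l$ covered by a given $L$. I would sanity-check the derivation at $l = 1, 2, 3$ (giving sums $1, 3, 5$) against the closed form to rule out off-by-one errors before presenting the calculation.
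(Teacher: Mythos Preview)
Your argument is correct. Your block decomposition is sound, the identity $\sum_{k=1}^{n} k\,2^{k-1} = (n-1)2^n + 1$ is standard, and the final identification $\lfloor \log_2 l \rfloor = L-1$ follows directly from $2^{L-1}\le l \le 2^L-1$.

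It is, however, a genuinely different route from the paper's. The paper uses an Abel-summation (summation by parts) step:
\[
\sum_{l'=1}^{l}\bigl\lceil \log_2(l'+1)\bigr\rceil
= (l+1)\bigl\lceil \log_2(l+1)\bigr\rceil
- \sum_{l'=1}^{l} l'\bigl(\lceil \log_2(l'+1)\rceil - \lceil \log_2 l'\rceil\bigr),
\]
and then observes that the jump $\lceil \log_2(l'+1)\rceil - \lceil \log_2 l'\rceil$ equals $1$ exactly when $l'$ is a power of two and $0$ otherwise, so the second sum collapses to $\sum_{t=0}^{\lfloor \log_2 l\rfloor} 2^t = 2^{\lfloor \log_2 l\rfloor+1}-1$. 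Your approach instead partitions the index set by the value of $\lceil \log_2(l'+1)\rceil$ and evaluates the resulting arithmetic-geometric sum via the closed form for $\sum k\,2^{k-1}$. The paper's Abel-summation route is marginally slicker in that it sidesteps that auxiliary identity and drops the $2^{\lfloor \log_2 l\rfloor+1}$ term out directly; your block partition is arguably more transparent about \emph{why} the answer has the shape it does. Either is entirely adequate for this technical lemma.
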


Next, we provide a lemma that (upper) bounds the probability that the computed matching does not coincide with the optimal matching $m^{*}$ in an epoch, when the epoch index is large enough.


\begin{lemma}[Error Probability]
\label{lem:error_prob}
    Denote $E_l$ as the event that in epoch $l$, the matching $\hat{m}$ returned by the \Cref{alg:weighted_egalitarian} is different from $m^*$. Then we can bound the error probability as 
    \begin{equation*}
    P(E_l) \leq 4N^2 \exp(-\frac{c_1 l \log(l)\beta^2}{2})
\end{equation*}
for any $l > l_0$, where $l_0$ and $c_1$ are two constants that are unrelated to the problem, and $\beta = \min\{\frac{\delta}{4N}, \Delta^a/2, \Delta^b/2\}$.
\end{lemma}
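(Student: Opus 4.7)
The plan is to reduce the probability bound to a standard Hoeffding-plus-union-bound calculation, using \Cref{lem:condition_correct} to translate the ``wrong matching'' event into a set of concentration failures.

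First, I would invoke \Cref{lem:condition_correct}: the event $E_l$ can only occur if at least one of the $2N^2$ sample means $\hat{\mu}_{a_i,b_j}$ or $\hat{\mu}_{b_j,a_i}$ deviates from its true mean by at least $\beta = \min\{\delta/(4N),\Delta^a/2,\Delta^b/2\}$. Denoting by $\mathcal{B}_{i,j}^a$ and $\mathcal{B}_{j,i}^b$ the corresponding ``bad'' events, a union bound gives
\begin{equation*}
P(E_l) \leq \sum_{i,j} P(\mathcal{B}_{i,j}^a) + \sum_{i,j} P(\mathcal{B}_{j,i}^b).
\end{equation*}

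Next, I would quantify the number of independent samples each pair has accumulated by the end of the exploration phase of epoch $l$. By the round-robin schedule, in epoch $l'$ each agent pulls each arm exactly $\ceil*{\log_2(l'+1)}$ times, so the cumulative sample count per pair through epoch $l$ is $n_l = \sum_{l'=1}^{l}\ceil*{\log_2(l'+1)}$, which by \Cref{lem:technical} equals $(l+1)\ceil*{\log_2(l+1)} - 2^{\floor*{\log_2(l)}+1} + 1$. Since $\ceil*{\log_2(l+1)} \geq \log_2(l)$ and $2^{\floor*{\log_2(l)}+1} \leq 2l$, for $l$ sufficiently large (some threshold $l_0$ chosen so that $\log_2(l) - 2 \geq \tfrac{1}{2}\log_2(l)$) one obtains $n_l \geq \tfrac{1}{2}l\log_2(l) \geq c_1\, l\log(l)$ for a suitable absolute constant $c_1$ (e.g.\ $c_1 = 1/(2\ln 2)$).

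With this lower bound on the number of samples, I apply the subgaussian tail bound: since each sample is $1$-subgaussian, the sample average over $n_l$ samples satisfies
\begin{equation*}
P(|\hat{\mu} - \mu| \geq \beta) \leq 2\exp\!\left(-\tfrac{n_l \beta^2}{2}\right) \leq 2\exp\!\left(-\tfrac{c_1 l\log(l)\beta^2}{2}\right).
\end{equation*}
Substituting into the union bound over the $2N^2$ estimates yields
\begin{equation*}
P(E_l) \leq 2N^2 \cdot 2\exp\!\left(-\tfrac{c_1 l\log(l)\beta^2}{2}\right) = 4N^2\exp\!\left(-\tfrac{c_1 l\log(l)\beta^2}{2}\right),
\end{equation*}
as claimed.

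The main obstacle is the bookkeeping in step two: one must extract the asymptotic $l\log(l)$ lower bound from the closed form in \Cref{lem:technical} and absorb the logarithm base change and the $-2^{\floor*{\log_2 l}+1}$ correction into a single multiplicative constant $c_1$, at the cost of restricting to $l > l_0$. Everything else is a direct application of Hoeffding's inequality and the union bound, together with the sufficient condition from \Cref{lem:condition_correct}.
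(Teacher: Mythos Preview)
Your proposal is correct and follows essentially the same approach as the paper's proof: invoke \Cref{lem:condition_correct} to reduce $E_l$ to concentration failures, union-bound over the $2N^2$ estimates, lower-bound the per-pair sample count using \Cref{lem:technical} to get $n_l \geq c_1 l\log(l)$ for $l > l_0$, and apply the subgaussian tail bound. Your treatment of the sample-count step is in fact more explicit than the paper's, which simply asserts the $\Theta(l\log l)$ growth and the existence of $c_1,l_0$ without working through the closed form.
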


\begin{proof}
First, since the estimation of the expected rewards uses all the previous exploration phases, we compute the number of samples at epoch $l$ by \Cref{lem:technical}:
\begin{equation}
    \sum_{l' = 1}^{l}N \ceil*{ log(l' + 1)} = \Theta(N l \log(l)).
\end{equation}

In other words, there exists a constant $l_0$ and $c_1$, such that for any $l > l_0$, we have the lower bound on the number of samples $\sum_{l' = 1}^{l}N\ceil*{log(l' + 1)} \geq c_1 Nl \log(l)$. 

By \Cref{lem:condition_correct}, we have
\begin{align*}
   P(E_l) &\leq P[\exists i,j, s.t. |\mu_{a_i, b_j} - \hat{\mu}_{a_i, b_j}| \geq min\{\frac{\delta}{4N}, \Delta^a/2\} \\
   &\text{or} \ |\mu_{b_j, a_i} - \hat{\mu}_{b_j, a_i}| \geq min\{\frac{\delta}{4N}, \Delta^b/2\}] \\ 
   &\leq N^2 \cdot P[ |\mu_{a_i, b_j} - \hat{\mu}_{a_i, b_j}| \geq min\{\frac{\delta}{4N}, \Delta^a/2\}] \\
   &+ N^2 \cdot P[ |\mu_{b_j, a_i} - \hat{\mu}_{b_j, a_i}| \geq min\{\frac{\delta}{4N}, \Delta^b/2\}]
\end{align*}
by union bound.

Since $\mu_{a_i, b_j} - \hat{\mu}_{a_i, b_j}$ ($\mu_{b_j, a_i} - \hat{\mu}_{b_j, a_i}$) is the $\frac{1}{\sqrt{h}}$-subgaussian with mean 0 by \Cref{lem:subgaussian}, where $h = c_1 l \log(l)$ we have that 
\begin{align*}
    P[ |\mu_{a_i, b_j} - \hat{\mu}_{a_i, b_j}| &\geq min\{\frac{\delta}{4N}, \Delta^a/2\}] \leq 2 e^{-\frac{c_1 l \log(l)\beta^2}{2}} 
\end{align*}
and 
\begin{align*}
    P[ |\mu_{b_j, a_i} - \hat{\mu}_{b_j, a_i}| &\geq min\{\frac{\delta}{4N}, \Delta^b/2\}] \leq 2 e^{-\frac{c_1 l \log(l)\beta^2}{2}} 
\end{align*}
where $\beta = \min\{\frac{\delta}{4N}, \Delta^a/2, \Delta^b/2\}$.

Therefore, we conclude that
\begin{equation*}
    P(E_l) \leq 4N^2 \exp(-\frac{c_1 l \log(l)\beta^2}{2})
\end{equation*}
for any $l > l_0$.
\end{proof}




Bounding the error probability enables us to prove our main result in this section.

\begin{theorem}
\label{thm:ega_regret}
 Given a stable matching market problem with unknown utilities and $T$ planning horizon, \Cref{alg:equalExplore} computes the utilitarian-optimal stable matching with the regret of      \begin{equation*}
        Reg(T) = O(N^2 \log(T) \log(\log(T)) + N^3\log(T) + 2^{l_1}N),
    \end{equation*}
    where $l_1 = max \{ exp(\frac{2}{c_1 \beta^2}), l_0\}$, $\beta = min \{ \frac{\delta}{4N}, \Delta^a/2, \Delta^b/2\}$, and $c_1, l_0$ are two constants.

\end{theorem}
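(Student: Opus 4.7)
The plan is to decompose the expected regret into contributions from the exploration and exploitation phases summed across epochs, and to use \Cref{lem:error_prob} to bound the probability that the committed matching differs from $m^*$. First I would bound the epoch count $L$: since the exploitation phase of epoch $l$ alone has length $2^l$ and $\sum_{l=1}^L 2^l \le T$, we immediately get $L = O(\log T)$, and the polylogarithmic exploration lengths do not change this.

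Next I would bound the exploration regret. Utilities are bounded and $2N$ participants play per round, so a single exploration step incurs $O(N)$ regret. Epoch $l$ contains $N\lceil\log_2(l+1)\rceil$ exploration steps, contributing $O(N^2 \log l)$ regret, and summing over $l \le L$ yields $O(N^2 L \log L) = O(N^2 \log T \log \log T)$, the first claimed term. Then for exploitation, I would condition on the event $E_l$ from \Cref{lem:error_prob}: on $\neg E_l$, \Cref{alg:weighted_egalitarian} commits to $\hat m = m^*$ and the phase contributes zero regret; on $E_l$ it contributes at most $O(N\,2^l)$, so the expected epoch-$l$ exploitation regret is $O(N\,2^l\,P(E_l))$. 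I would then split the epoch sum at $l_1 = \max\{\exp(2/(c_1\beta^2)),\,l_0\}$. For $l \le l_1$, the trivial bound $P(E_l)\le 1$ yields a geometric sum dominated by its last term, $O(N\,2^{l_1})$. For $l > l_1$, the calibration of $l_1$ forces $c_1\beta^2 \log(l)/2 \ge 1$, so \Cref{lem:error_prob} gives $P(E_l) \le 4N^2 e^{-l}$, making each epoch contribute $O(N^3 (2/e)^l)$, whose sum over $O(\log T)$ remaining epochs is at most $O(N^3 \log T)$.

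The main obstacle is calibrating the threshold $l_1$ so that the exponential decay of $P(E_l)$ genuinely dominates the geometric growth $2^l$ of the exploitation length. The factor $\exp(2/(c_1\beta^2))$ in $l_1$ is precisely what forces $c_1\beta^2 \log(l)/2 \ge 1$ for $l > l_1$, so that $P(E_l)\,2^l$ behaves like $(2/e)^l$; the condition $l_1 \ge l_0$ ensures the tail inequality of \Cref{lem:error_prob} applies from the first term onward. A secondary subtlety is that one must keep the dependence on $\beta = \min\{\delta/(4N),\,\Delta^a/2,\,\Delta^b/2\}$ explicit, since $\beta$ contains a $1/N$ factor through $\delta/(4N)$, which is why $l_1$ naturally carries the problem-dependent pre-exponential dependence recorded in the statement. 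Once this calibration is in place, the rest is routine bookkeeping: adding the $O(N^2 \log T \log\log T)$ exploration contribution, the $O(N\,2^{l_1})$ early-epoch contribution, and the $O(N^3 \log T)$ late-epoch contribution produces the claimed three-term bound.
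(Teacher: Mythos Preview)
Your proposal is correct and follows essentially the same argument as the paper: bound the number of epochs by $L=O(\log T)$, charge $O(N)$ per exploration step to get the $O(N^2\log T\log\log T)$ term, split the exploitation contribution at $l_1=\max\{\exp(2/(c_1\beta^2)),l_0\}$, use the trivial bound for $l\le l_1$ to produce the $O(N\cdot 2^{l_1})$ term, and use \Cref{lem:error_prob} with the calibration $c_1\beta^2\log(l)/2\ge 1$ for $l>l_1$ to make $2^l P(E_l)$ summable and obtain the $O(N^3\log T)$ term. The only cosmetic difference is that the paper bounds the per-epoch regret $R_l$ in one line (writing $(2\gamma)^l<1$ with $\gamma=\exp(-c_1\beta^2\log(l)/2)$) rather than separating exploration and exploitation explicitly; your $(2/e)^l$ computation is a slightly sharper way of saying the same thing and yields the identical final bound.
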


\begin{proof}
    Denote $R_l$ as the regret within epoch $l$, and denote $L$ as the number of epochs that starts within the $T$ turns.

    Since 
    \begin{align*}
        T &\geq \sum_{l = 1}^{L-1} 2^l \\
        &=2^L - 2,
    \end{align*}
    and so 
    \begin{equation}
        L \leq \log_{2}(T + 2)
        \label{eq:bound_ega_L}
    \end{equation}

        Then by \Cref{lem:error_prob}, since the worst regret for one time is $2N$, we compute the upper bound of expected regrets of epoch $l > l_0$:
\begin{align*}
    R_l &\leq 2N( N \ceil*{\log_2(l + 1)}  + P(E_l) \cdot 2^l) \\
    &\leq 2N( N \ceil*{\log_2(l + 1)} + 4N^2(2\gamma)^l),
\end{align*}
where $\gamma = \exp(- \frac{c_1 log(l)\beta^2}{2})$.

Furthermore, for any $l > l_1 = max(\exp(\frac{2}{c_1 \beta^2}), l_0)$, we have $\gamma < \frac{1}{2}$ and so $2\gamma < 1$.
Thus, for any $l > l_1$, we have  
\begin{equation*}
    R_l \leq 2N( N \ceil*{\log_2(l + 1)} + 4c_1N^2).
\end{equation*}

Finally we compute the cumulative regret
\begin{align*}
    Reg(T) &= \sum_{l = 1}^{L}R_l \\
    &\leq 2N(\sum_{l = 1}^{L} N \ceil*{\log_2(l + 1)} + \sum_{l = 1}^{l_1}2^l + \sum_{l = l_1 + 1}^{L} 4N^2) \\
    &\leq 2N( O(NL\log(L)) +  4N^2L + 2^{l_1 + 1}).
\end{align*}

Combining with \Cref{eq:bound_ega_L}, we have 
\begin{align*}
    &Reg(T) \\
    &\leq O(N^2 \log(T) \log(\log(T)) + N^3\log(T) + 2^{l_1}N)   
\end{align*}
and the proof is complete.
\end{proof}

\begin{remark}
    If we assume that $T$ is much larger than the instance-specific constants $N$ and $\beta$: $T \geq exp(exp(exp(\frac{1}{\beta^2})))$ and\\ $T \geq exp(exp(N))$, then we have 
    \begin{equation*}
        Reg(T) = O(N^2log(T) loglog(T)).
    \end{equation*}
\end{remark}

\begin{remark}\label{rem:unequal}
    When the number of agents is not equal to the number of arms, i.e. $N \ne K$, according to the rural hospital theorem~\citep{roth1986allocation}, there is a fixed set of agents and arms that are stable partners. Therefore, we can define the reward function and the regret function for agents and arms that are matched in a stable matching ($\min\{N,K\}$ agents and $\min\{N,K\}$ arms) with respect to true preference utilities. 
    The exploration phase lasts $max\{ N, K\} \ceil*{\log(l+1)}$ rounds in the $l$-th epoch so that all agents have pulled each arm for at least $\ceil*{\log(l+1)}$, and so do all arms.
    We redefine $\beta = min \{ \frac{\delta}{4 min\{N,K\}}, \Delta^a/2, \Delta^b / 2\}$, and revise the probability bound of \Cref{lem:error_prob} as \\$P(E_l) \leq 4\min \{ N, K\}^2 \exp(-\frac{c_1 l \log(l)\beta^2}{2})$, and the regret bound as $Reg(T) = O(\min \{N,K \} \max \{ N, K\} log(T) loglog(T))$.
    Similar analysis can be done for learning a maximin stable matching and we omit the details. 
\end{remark}
    
\section{Learning Maximin (Rawlsian) Stable Matching}

A Rawlsian approach to social welfare requires that the utility of the worst-off agent (irrespective of the side) to be maximized. 
In this section, we formally define maximin stable matchings, describe an algorithm to find such a matching when preferences are known, and design an algorithm based on epoch ETC with its regret bound.

\begin{algorithm}[t]\small
    \SetKwInOut{Input}{Input}
    \SetKwInOut{Output}{Output}
    \Input{Agents' utilities and arms' utilities.}
    \Output{The arm-side maximin stable matching} 
    Find the agent-optimal stable matching $m_0$, and find the arm-optimal stable matching;\\
    Let $b$ be an arm with minimum reward $R(m_i)$ in $m_i$, and let $a$ be its mate. If $a$ and $b$ are a pair in the arm-optimal stable matching, then stop and output $m_i$; $m_i$ is the arm-side maximin stable matching.
    Else perform break-matching operation on $m_i$ and $a$, and let $m_{i+1}$ be the resulting stable matching.\\
    If there are no arm with reward $R(m_{i+1})$ in $m_{i+1}$, then stop and output $m_i$; $m_i$ is the arm-side maximin stable matching. Else set $i = i+1$ and go to previous step.

    \caption{Arm-side Maximin (Rawlsian) Algorithm}\label{alg:weighted_maximin}
\end{algorithm}

\begin{algorithm}[t]\small
    \SetKwInOut{Input}{Input}

    \Input{Time horizon $T$.}
    \For{each epoch $l = 1, 2, \ldots$}{ 
    \tcp{Exploration Phase:}
    Each agent pulls arms in a round-robin way for $N \cdot \ceil*{ log_{2}(l+1)}$ rounds;\\
    \tcp{Matching Phase:}
    Estimate $\hat{\mu}^a$ and $\hat{\mu}^b$;\\
    Apply \Cref{alg:weighted_maximin} based on $\hat{\mu}^a$ and $\hat{\mu}^b$ to find $\hat{m}$; 
    \tcp{Exploitation Phase:}
    Agent $a_i$ Pulls arm $\hat{m}(a_i)$ for $2^l$ turns for every $i$.
}
    \caption{Maximin Epoch ETC Algorithm}\label{alg:equalExplore-minimax}
\end{algorithm}

\subsection{Maximin Stable Matching}

The maximin stable matching is a stable matching where the minimum reward across all agents and arms is maximized.

\begin{definition}[Maximin Stable Matching]
    Given a matching, a minimum reward on both side of the markets is \begin{equation*}
    R(m) = \min  \{ \{ \mu_{a_i,m(a_i)}\}_{i} \cup \{ \mu_{j, m(b_j)}\}_{j} \},
\end{equation*}
    
    The \textbf{maximin stable matching}, $m^{*}\in \mathcal{M}$, is a stable matching that maximize the minimum reward, i.e. $m^* = \arg\max_{m\in \mathcal{M}} R(m)$.
\end{definition}

Similarly, we have the estimated minimum rewards across agents and arms
\begin{equation*}
    \hat{R}(m) = \min \{  \{ \hat{\mu}_{a_i,m(a_i)}\}_{i} \cup  \{ \hat{\mu}_{b_j, m(b_j)}\}_{j} \},
\end{equation*}
and $\hat{m}$ as the stable matching with respect to $\hat{\mu}$ that has the largest estimated mimimum rewards.

The maximin stable matching algorithm~\citep{gusfield1987three} can be used to compute the maximin stable matching when preferences are known.
Although the original paper studied ordinal preference, we slightly change the algorithm for cardinal utilities and state it in \Cref{alg:weighted_maximin}.

\paragraph{\textbf{Computing a Maximin Stable Matching.}}
Given agents' and arms' utilities over the other side, the goal is to find the maximin stable matching.
The problem can be divided into two subproblems: finding the arm-side maximin stable matching and the agent-side maximin stable matching.
For the first subproblem, we only consider the stable matchings such that one arm has the minimum reward, and the arm-side maximin stable matching is the matching that maximizes the minimum reward among these matchings.
Similarly, if we only consider the stable matchings such that one agent has the minimum reward, and the agent-side maximin stable matching is the one that maximizes the minimum reward among these stable matchings.
\Cref{alg:weighted_maximin} shows the procedure to find the arm-side maximin stable matching.
The algorithm to find the agent-side maximin stable matching can be derived by switching the roles of agents and arms.
The algorithm starts from the agent-optimal stable matching, and breaks the matching for the agent-arm pair in which the arm has the minimum reward.
The algorithm stops either when one agent is matched to the partner in the arm-optimal stable matching, or when no arm has the minimum reward.

\subsection{Maximin Epoch ETC Algorithm}
In this section, we propose an algorithm that has similar structure to \Cref{alg:equalExplore} to find the maximin stable matching when preferences are unknown.

We define the maximin regret by comparing the minimum reward of the matching with the optimal minimum reward, and taking sum for all time:
\begin{equation*}
   Reg^{MM}(T) = \sum_{t = 1}^{T}\mathbb{E}[R(m^*) - \min \{  \{ \mu_{a_i, A_i(t)}\}_{i} \cup \{ \mu_{b_j, A^{-1}_j(t)}\}_{j}\} ].
\end{equation*}

The proposed \Cref{alg:equalExplore-minimax} combines the epoch-type ETC with \Cref{alg:weighted_maximin}. 
In each epoch $l$, each agent pulls arms for $N \cdot \ceil*{\log_2(l+1)}$ rounds in a round-robin way.
The difference between \Cref{alg:equalExplore-minimax} and \Cref{alg:equalExplore} is the matching phase.
Given the sample average, agents commit to the matching computed by \Cref{alg:weighted_maximin} for $2^l$ rounds.

\subsection{Analysis}

In this section, we provide theoretical results on the regret bounds. 
We introduce the cross-side minimum preference gap that minimizes the preference gap across all agents or arms, while the within-side minimum preference gap defined in \Cref{eg_analysis} is to minimize the preference gap within individuals.

\paragraph{\textbf{ Cross-side Minimum Preference Gap.}}

We define the agent cross-side minimum preference gap as\\
$\Gamma^{a} =  \min_{\substack{i,j,k,l \\ \mu_{a_i, b_j} - \mu_{a_k, b_l} \neq 0}} \left| \mu_{a_i, b_j} - \mu_{a_k, b_l} \right|$, and the arm cross-side minimum preference gap as
$\Gamma^{b} =  \min_{\substack{i,j,k,l \\ \mu_{b_i, a_j} - \mu_{b_k, a_l} \neq 0}} \left| \mu_{b_i, a_j} - \mu_{b_k, a_l} \right| $, and $\Gamma = \min \{ \Gamma^a, \Gamma^b\}$.
By definition, we have $\Gamma > 0$ and $\Gamma^{a} \leq \Delta^a, \Gamma^{b} \leq \Delta^b$.


The following lemma provides a condition when some estimation error is tolerable to find maximin stable matching.

\begin{lemma}
\label{lem:minimax_condition_correct}
If for all agent $a_i$ and arm $b_j$, the estimation can be bounded as 
\begin{equation*}
    |\hat{\mu}_{a_i, b_j} - \mu_{a_i, b_j}| <  \Gamma/2,
\end{equation*} 
and
\begin{equation*}
    |\hat{\mu}^{b}_{j,i} - \mu^{b}_{j,i}| <  \Gamma/2,
\end{equation*}
then we have that $R(\hat{m}) = \max_{m \in \mathcal{M}} R(m)$, where $\hat{m}$ is the matching computed by \Cref{alg:equalExplore-minimax} based on estimations $\hat{\mu}$.
\end{lemma}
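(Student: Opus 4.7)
The plan is to prove the lemma by a contradiction argument that combines \Cref{lem:stable} with a Lipschitz estimate on the min functional $R(\cdot)$ that defines the maximin objective.

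First, observe that $\Gamma = \min\{\Gamma^{a},\Gamma^{b}\} \leq \min\{\Delta^{a},\Delta^{b}\}$, so the hypothesis immediately implies $|\hat{\mu}_{a_i,b_j}-\mu_{a_i,b_j}|<\Delta^{a}/2$ and $|\hat{\mu}_{b_j,a_i}-\mu_{b_j,a_i}|<\Delta^{b}/2$ for every pair $(i,j)$. By \Cref{lem:stable} the preference profile induced by $\hat{\mu}$ agrees with the true one on both sides, so the set $\mathcal{M}$ of stable matchings under $\mu$ coincides with the set of stable matchings under $\hat{\mu}$. In particular $\hat{m}$ is stable with respect to $\mu$ and $m^{*}$ is stable with respect to $\hat{\mu}$.

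Second, I would establish a uniform control between the true and estimated maximin rewards. Since the minimum of a finite collection of real numbers is $1$-Lipschitz in its arguments under the $\ell_{\infty}$ norm and every individual utility entry is estimated within $\Gamma/2$, for every matching $m$,
\begin{equation*}
|R(m)-\hat{R}(m)| < \Gamma/2.
\end{equation*}

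Finally, suppose for contradiction that $R(\hat{m})<R(m^{*})$. Each of $R(\hat{m})$ and $R(m^{*})$ is itself a realized utility value achieved by the corresponding matching, and by the cross-side gap definition any two such values that differ at all must differ by at least $\Gamma$. Hence $R(m^{*})-R(\hat{m})\geq \Gamma$, and combining with the Lipschitz estimate,
\begin{equation*}
\hat{R}(m^{*}) > R(m^{*})-\Gamma/2 \geq R(\hat{m})+\Gamma/2 > \hat{R}(\hat{m}),
\end{equation*}
contradicting the fact that $\hat{m}$ maximizes $\hat{R}$ over the common set $\mathcal{M}$ of stable matchings, as guaranteed by \Cref{alg:weighted_maximin} run on $\hat{\mu}$.

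The main obstacle is the step asserting that two distinct realized minima differ by at least $\Gamma$ when they may originate from opposite sides of the market—an agent's utility for one matching versus an arm's utility for another. Handling this cleanly likely requires either interpreting $\Gamma$ as the minimum nonzero gap among the utility values pooled across both sides, or decomposing the objective into the arm-side and agent-side subproblems that \Cref{alg:weighted_maximin} already separates, applying the argument to each subproblem, whose minima lie on a single side and are therefore governed by $\Gamma^{a}$ or $\Gamma^{b}$ alone, and finally taking the better of the two.
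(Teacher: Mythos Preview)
Your argument is essentially the paper's own proof: use \Cref{lem:stable} (via $\Gamma\le\min\{\Delta^a,\Delta^b\}$) to identify the stable sets under $\mu$ and $\hat\mu$, observe $|R(m)-\hat R(m)|<\Gamma/2$ for every matching, and derive the contradiction chain $\hat R(m^*)>R(m^*)-\Gamma/2\ge R(\hat m)+\Gamma/2>\hat R(\hat m)$. The obstacle you flag---that the two minima might come from opposite sides and hence not be separated by $\Gamma$ as defined---is real, but the paper simply asserts $R(\hat m)+\Gamma\le R(m^*)$ ``by the construction of $\Gamma$'' without further comment, so your write-up is already at least as careful as the original.
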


\begin{proof}
    By \Cref{lem:stable}, $\hat{\mu}$ induces the true ordering within all agents/arms and so the matching $\hat{m}$ is guaranteed to be stable with respect to $\mu$.
    In other words, a matching is stable with respect to $\hat{\mu}$ if and only if it is stable with respect to $\mu$.

    We then prove the lemma by contradiction. Assume that $R(\hat{m}) < R(m^*)$, and by the construction of $\Gamma$, $R(\hat{m}) + \Gamma \leq R(m^*)$.
    Since the estimated utility does not deviate from the true value more than $\Gamma/2$, we have that for any matching $m$,
    \begin{equation*}
       |\hat{R}(m) - R(m)| < \Gamma/2. 
    \end{equation*} 
    Therefore, we have
    \begin{equation*}
       \hat{R}(m^*) > R(m^*) - \Gamma/2, 
    \end{equation*} 
    and
    \begin{equation*}
       \hat{R}(\hat{m}) < R(\hat{m}) + \Gamma/2.  
    \end{equation*}
    These equations combined with $R(\hat{m}) + \Gamma \leq R(m^*)$ immediately gets
    \begin{equation*}
      \hat{R}(m^*) > R(m^*) - \Gamma/2 \geq  R(\hat{m}) + \Gamma/2 > \hat{R}(\hat{m}). 
    \end{equation*}
    This is a contradiction since $m^*$ is stable with respect to $\hat{\mu}$ and should have been chosen as the maximin stable matching with respect to the estimated utility profile, $\hat{\mu}$.
    \end{proof}

    Next, we provide a lemma that bounds the probability that the matching found in an epoch does not coincide with the maximin stable matching.

\begin{lemma}[Error Probability]
\label{lem:minimax_error_prob}
    Denote $E_l$ as the event that in epoch $l$, the matching $\hat{m}$ returned in the matching phase is different from $m^*$. Then we can bound the error probability as 
    \begin{equation*}
    P(E_l) \leq 4N^2 \exp(-\frac{c_1 l \log(l)\Gamma^2}{8})
\end{equation*}
for any $l > l_0$, where $l_0$ and $c_1$ are two constants.
\end{lemma}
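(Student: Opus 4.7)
The plan is to mirror the proof of \Cref{lem:error_prob} almost verbatim, swapping the tolerance $\beta$ for $\Gamma/2$ as dictated by the maximin-specific condition in \Cref{lem:minimax_condition_correct}. First I would use \Cref{lem:technical} exactly as before to argue that after the exploration phases of epochs $1,\ldots,l$, each agent-arm pair has been sampled a number of times equal to $\sum_{l'=1}^{l}\lceil \log_2(l'+1)\rceil = \Theta(l\log l)$, so there exist universal constants $l_0$ and $c_1$ (independent of the instance) such that for every $l>l_0$ each pair has at least $h:=c_1 l \log l$ independent samples in both directions.

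Next I would invoke \Cref{lem:minimax_condition_correct} in its contrapositive form: if $\hat{m}\neq m^*$, then there must exist some agent $a_i$ and arm $b_j$ for which either $|\hat{\mu}_{a_i,b_j}-\mu_{a_i,b_j}|\geq \Gamma/2$ or $|\hat{\mu}_{b_j,a_i}-\mu_{b_j,a_i}|\geq \Gamma/2$. A union bound over the $N^2$ agent-arm pairs and the two sides of the market then gives
\begin{equation*}
P(E_l) \;\leq\; \sum_{i,j}\Bigl( P\bigl[|\hat{\mu}_{a_i,b_j}-\mu_{a_i,b_j}|\geq \Gamma/2\bigr] + P\bigl[|\hat{\mu}_{b_j,a_i}-\mu_{b_j,a_i}|\geq \Gamma/2\bigr]\Bigr).
\end{equation*}

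Each summand is the tail probability of a $1/\sqrt{h}$-subgaussian variable (the sample mean of $h$ independent $1$-subgaussians, as used already in the proof of \Cref{lem:error_prob}), hence each is at most $2\exp(-h(\Gamma/2)^2/2) = 2\exp(-c_1 l \log(l)\Gamma^2/8)$. Summing the $2N^2$ terms yields the stated bound $P(E_l)\leq 4N^2\exp(-c_1 l\log(l)\Gamma^2/8)$ for all $l>l_0$.

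I do not anticipate any real obstacle beyond bookkeeping: the only substantive change from \Cref{lem:error_prob} is replacing the combined tolerance $\beta=\min\{\delta/(4N),\Delta^a/2,\Delta^b/2\}$ by the single tolerance $\Gamma/2$ (justified by \Cref{lem:minimax_condition_correct} rather than \Cref{lem:condition_correct}), which is what turns the $\beta^2/2$ in the exponent into $\Gamma^2/8$. The subgaussian accounting, the definition of $c_1$ and $l_0$, and the union bound are identical to the utilitarian case, so no new machinery is needed.
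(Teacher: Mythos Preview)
Your proposal is correct and matches the paper's own proof essentially line for line: the paper likewise invokes \Cref{lem:technical} to lower-bound the per-pair sample count by $c_1 l\log l$ for $l>l_0$, applies \Cref{lem:minimax_condition_correct} (in place of \Cref{lem:condition_correct}) to reduce $E_l$ to a deviation event with threshold $\Gamma/2$, union-bounds over the $2N^2$ estimates, and uses the $1/\sqrt{h}$-subgaussian tail to obtain $2\exp(-c_1 l\log(l)\Gamma^2/8)$ per term. The only cosmetic difference is that the paper states the total exploration rounds as $\Theta(Nl\log l)$ before extracting the per-pair count $h=c_1 l\log l$, whereas you go straight to the per-pair count; both are equivalent.
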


\begin{proof}
First, since the estimation of the expected rewards uses all the previous exploration phases, we compute the number of samples at epoch $l$ by \Cref{lem:technical}:
\begin{equation}
    \sum_{l' = 1}^{l}N \ceil*{ log(l' + 1)} = \Theta(N l \log(l)).
\end{equation}

In other words, there exists a constant $l_0$ and $c_1$, such that for any $l > l_0$, we have the lower bound on the number of samples $\sum_{l' = 1}^{l}N\ceil*{log(l' + 1)} \geq c_1 Nl \log(l)$. 

By \Cref{lem:minimax_condition_correct}, we have
\begin{align*}
   P(E_l) &\leq P[\exists i,j, s.t. |\mu_{a_i, b_j} - \hat{\mu}_{a_i, b_j}| \geq \Gamma/2 \ \text{or} \ |\mu_{b_j, a_i} - \hat{\mu}_{b_j, a_i}| \geq  \Gamma/2] \\ 
   &\leq N^2 \cdot P[ |\mu_{a_i, b_j} - \hat{\mu}_{a_i, b_j}| \geq  \Gamma/2] \\
   &+ N^2 \cdot P[ |\mu_{b_j, a_i} - \hat{\mu}_{b_j, a_i}| \geq  \Gamma/2].
\end{align*}
Since $\mu_{a_i, b_j} - \hat{\mu}_{a_i, b_j}$ ($\mu_{b_j, a_i} - \hat{\mu}_{b_j, a_i}$) is the $\frac{1}{\sqrt{h}}$-subgaussian with mean 0 by \Cref{lem:subgaussian}, where $h = c_1 l \log(l)$ we have that 
\begin{align*}
    P[ |\mu_{a_i, b_j} - \hat{\mu}_{a_i, b_j}| &\geq  \Gamma/2] \leq 2 e^{-\frac{c_1 l \log(l)\Gamma^2}{8}} & \text{[Definition of subgaussian]}
\end{align*}
and 
\begin{align*}
    P[ |\mu_{b_j, a_i} - \hat{\mu}_{b_j, a_i}| &\geq  \Gamma/2] \leq 2 e^{-\frac{c_1 l \log(l)\Gamma^2}{8}} & \text{[Definition of subgaussian]}.
\end{align*}

Therefore, we conclude that
\begin{equation*}
    P(E_l) \leq 4N^2 \exp(-\frac{c_1 l \log(l)\Gamma^2}{8})
\end{equation*}
for any $l > l_0$.
\end{proof}

The main theorem shows the regret bound for \Cref{alg:equalExplore-minimax} with the error probability lemma.

\begin{restatable}{theorem}{thmmmregret}

\label{thm:mm_regret}
Given a stable matching market problem with unknown utilities and $T$ planning horizon, \Cref{alg:equalExplore-minimax} computes the maximin stable matching with regret    \begin{equation*} 
    Reg^{MM} (T) = O(N \log(T) \log(\log(T)) +  4N^2\log(T) + 2^{l_1}),
    \end{equation*}
    where $l_1 = max \{ exp(\frac{8}{c_1\Gamma^2}), l_0\}$, and $l_0, c_1$ are two constants.
\end{restatable}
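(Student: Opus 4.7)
The plan is to mirror the proof of \Cref{thm:ega_regret} with two structural substitutions: the error-probability estimate now comes from \Cref{lem:minimax_error_prob} (using the cross-side gap $\Gamma$ rather than $\beta$), and the per-round regret scales differently. Let $L$ denote the number of epochs that begin within the $T$-round horizon, and let $R_l$ be the expected regret in epoch $l$. Since each epoch contributes at least $2^l$ exploitation rounds, the same geometric summation as in \Cref{thm:ega_regret} yields $L \leq \log_2(T+2)$.

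The key conceptual point is that the maximin regret at a single time step is the difference of two scalars (the optimal minimum reward versus the realized minimum reward) and is therefore bounded by a constant $c_M$ determined by the utility upper bound, rather than by $2N$ as in the utilitarian case. This is exactly what removes one factor of $N$ from the final bound. Concretely, I would write
\begin{equation*}
R_l \leq c_M \cdot N \lceil \log_2(l+1) \rceil + c_M \cdot P(E_l) \cdot 2^l,
\end{equation*}
where the first term covers the $N \lceil \log_2(l+1) \rceil$ exploration rounds and the second bounds the $2^l$ exploitation rounds, which contribute nonzero regret only when the matching returned by \Cref{alg:weighted_maximin} differs from $m^*$.

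Next I would pick the threshold $l_1 = \max\{\exp(8/(c_1\Gamma^2)), l_0\}$ so that for $l > l_1$ the exponent in \Cref{lem:minimax_error_prob} satisfies $-c_1 l \log(l) \Gamma^2 / 8 + l \log 2 < 0$; combined with the $4N^2$ prefactor this gives $P(E_l) \cdot 2^l \leq 4N^2$ in that regime. For the finitely many small epochs $l \leq l_1$ the exploitation term is bounded trivially by $2^l$, whose sum telescopes to $O(2^{l_1})$. Summing across all epochs,
\begin{equation*}
\sum_{l=1}^{L} R_l = O\bigl(N L \log L\bigr) + O\bigl(2^{l_1}\bigr) + O\bigl(N^2 L\bigr),
\end{equation*}
and substituting $L \leq \log_2(T+2)$ delivers the three summands $N\log(T)\log\log(T)$, $2^{l_1}$, and $N^2\log(T)$ in the stated bound.

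I do not anticipate a serious obstacle, since every ingredient is already in place: \Cref{lem:minimax_condition_correct} certifies correctness of \Cref{alg:weighted_maximin} under the $\Gamma/2$ accuracy condition, \Cref{lem:minimax_error_prob} converts this into the required subgaussian tail estimate, and \Cref{lem:technical} supplies the $\Theta(N l \log l)$ sample count used implicitly in the exponent. The only subtlety worth emphasizing in the write-up is the $O(1)$ per-round regret, which is the single reason the leading $\log T \log\log T$ term carries $N$ instead of $N^2$.
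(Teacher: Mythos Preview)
Your proposal is correct and follows essentially the same argument as the paper: both proofs bound the number of epochs by $\log_2(T+2)$, split the per-epoch regret into an exploration term $O(N\lceil\log_2(l+1)\rceil)$ and an exploitation term $P(E_l)\cdot 2^l$, use \Cref{lem:minimax_error_prob} to show the latter is at most $4N^2$ once $l>l_1$, and sum. Your explicit remark that the per-round maximin regret is $O(1)$ rather than $O(N)$ is exactly the observation the paper encodes by writing ``the worst case regret is a constant at each time,'' which is indeed what drops the leading term from $N^2$ to $N$.
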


The detailed proof is in \Cref{append:proof_thm4}.

\begin{remark}
    If we assume that $T$ is much larger than the instance-specific constants $N$ and $\Gamma$: $T \geq exp(exp(exp(\frac{1}{\Gamma^2})))$ and\\ $T \geq exp(exp(N))$, then we have 
    \begin{equation*}
        Reg(T) = O(Nlog(T) loglog(T)).
    \end{equation*}
\end{remark}

\begin{figure*}[t]
    \centering
    \begin{subfigure}{0.28\textwidth} 
        \centering
        \includegraphics[width=\linewidth,height=1.2in]{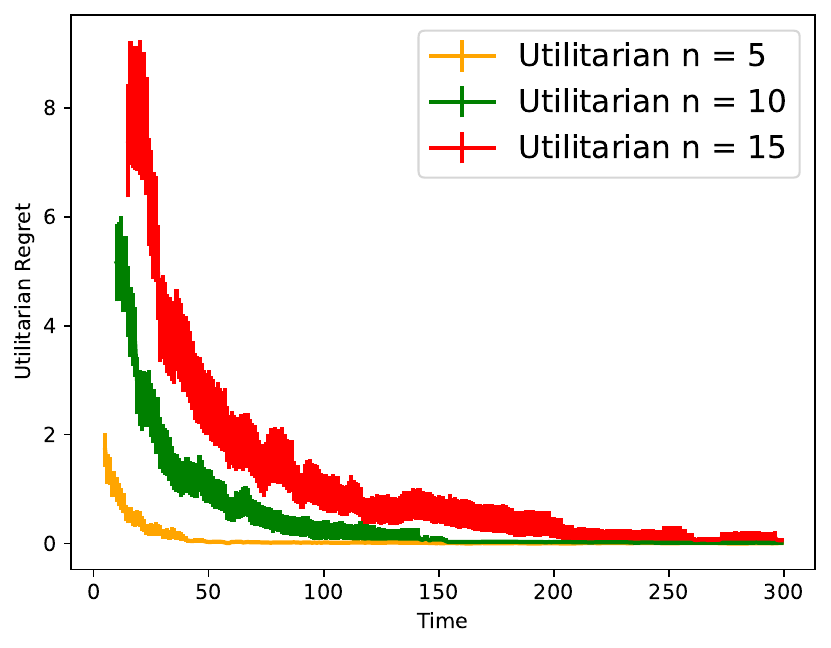} 
    \end{subfigure}
    \hfill 
    \begin{subfigure}{0.28\textwidth} 
        \centering
        \includegraphics[width=\linewidth,height=1.2in]{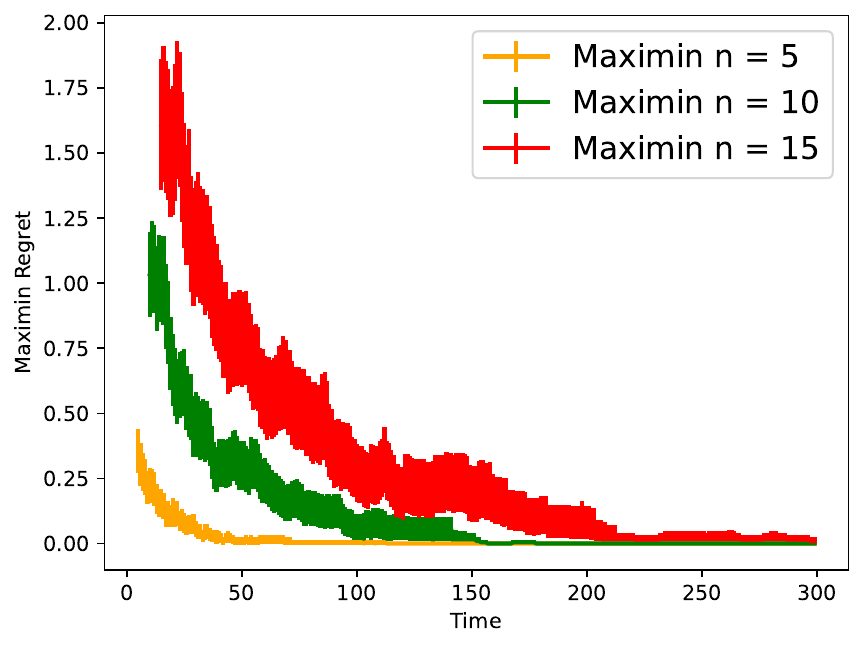} 
    \end{subfigure}
    \hfill 
    \begin{subfigure}{0.28\textwidth} 
        \centering
        \includegraphics[width=\linewidth,height=1.2in]{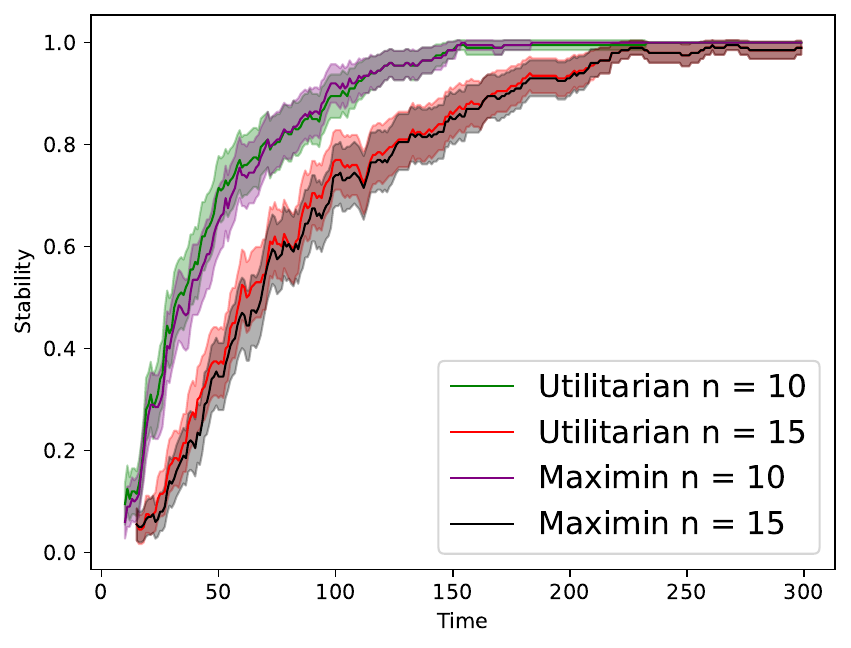} 
    \end{subfigure}
    \caption{$95 \%$ confidence interval of utilitarian regrets, maximin regrets, and stability; 200 preference profiles generated uniformly at random.}
    \label{fig.pdf}
\end{figure*}
\section{Experimental Results}
In this section, we experimentally validate our theoretical results by examining the regrets and stability for two types of Epoch ETC algorithms.
For this, we consider $N = K = 5, 10, 15$ and randomly generate preferences.
In particular, we follow similar constructions with previous literature \cite{liu2021bandit, hosseini2024puttinggaleshapley}: for each agent $a_i$, the true utilities $\{ \mu_{a_i,b_1},\mu_{a_i,b_2}, \ldots, \mu_{a_i,b_N} \}$ are randomized permutations of the sequence $\{ 1, 2,\ldots, N \}$.  
Arms' preferences are generated the same way independently.
We conduct 200 independent simulations, with each simulation featuring a randomized true preference profile.
We compute the utilitarian-optimal stable matching by \Cref{alg:equalExplore} and the maximin-optimal stable matching by \Cref{alg:equalExplore-minimax}, and examine utilitarian regrets (maximin regrets) as well as stability.

The first two subfigures in \Cref{fig.pdf} show the utilitarian regret for \Cref{alg:equalExplore} and the maximin regret for \Cref{alg:equalExplore-minimax}, where time is the total number of samples collected for the exploration phase.
For both algorithms, utilitarian regret and maximin regret converge to $0$ as the number of explorations increases. 
It also shows that the number of samples to find the utilitarian-optimal stable matching (or maximin stable matching) increases when the number of agents/arms increases.
The last subfigure shows the average stability for two algorithms.
We note that from the figure, \Cref{alg:equalExplore} and \Cref{alg:equalExplore-minimax} have similar performances in terms of stability.

The following two examples show that when estimation errors occur, stability of one solution does not ensure the stability of another. 
\Cref{ex:1} demonstrates that \Cref{alg:equalExplore} does not reach a stable matching when there exists some estimation error, while \Cref{alg:equalExplore-minimax} reaches a stable matching with the same estimation; \Cref{ex:2} shows the vice versa:  \Cref{alg:equalExplore-minimax} does not reach a stable matching while \Cref{alg:equalExplore} does.

\begin{example}
\label{ex:1}
    Consider two agents $\{a_1,a_2\}$ and two arms $\{ b_1,b_2\}$. 
    Assume true preferences are given below:

\begin{minipage}{0.45\linewidth}
\begin{align*}
a_1 &: \underline{b_2 (1)} \succ b_1 (0)^*  \\
a_2 &: b_2 (1)^* \succ \underline{b_1 (0)}
\end{align*}
\end{minipage}
\hfill
\begin{minipage}{0.45\linewidth}
\begin{align*}
b_1 &: a_1 (1)^{*} \succ \underline{a_2 (0)}  \\
b_2 &: a_2 (1)^* \succ \underline{a_1 (0)} 
\end{align*}
\end{minipage}

The matching denoted by $^*$ is the only stable matching.
When estimated preferences for agents are:
\begin{equation*}
   a_1 : b_2 (1.03) \succ b_1 (0.44), 
a_2 : b_1 (1.79) \succ b_2 (0.13) ,
\end{equation*}
then the maximin stable matching with respect to the estimation is stable, but utilitarian-optimal stable matching with respect to the estimation is the underlined matching and unstable.
\end{example}

\begin{example}
\label{ex:2}
    Consider two agents $\{a_1,a_2\}$ and two arms $\{ b_1,b_2\}$. 
    Assume true preferences are given below:

\begin{minipage}{0.45\linewidth}
\begin{align*}
a_1 &: b_1 (1)^* \succ \underline{b_2 (0.5)}  \\
a_2 &: \underline{b_1 (1)} \succ b_2 (0.5) ^*
\end{align*}
\end{minipage}
\hfill
\begin{minipage}{0.45\linewidth}
\begin{align*}
b_1 &: a_1 (1) ^* \succ \underline{a_2 (0.5)}  \\
b_2 &: a_2 (1) ^* \succ \underline{a_1 (0.5)} 
\end{align*}
\end{minipage}

The matching denoted by $^*$ is the only stable matching.
When estimated preferences for agents are:
\begin{equation*}
   a_1 : b_2 (0.69) \succ b_1 (0.02), 
a_2 : b_1 (1.68) \succ b_2 (1.38) ,
\end{equation*}
then the utilitarian-optimal stable matching with respect to the estimation is stable, but maximin stable matching with respect to the estimation is the underlined matching and unstable.

\end{example}


\section{Concluding Remarks}
In this paper, we study the bandit learning problem in two-sided matching markets, when both sides of the market are unaware of their utilities and must learn through sampling.
We focus on two welfarist approaches: Utilitarian and Rawlsian.
We proposed two types of epoch ETC algorithms and analyze their regret bounds.
The analysis is based on two different types of minimum preference gaps: within-side and cross-side.

We conclude by discussing some future directions.
First, we can consider extending the problem to many-to-one matching markets.
Studying the sample complexity to reach a utilitarian-optiaml stable matching (or maximin stable matching) under the PAC setting is another future direction.
Lastly, we can consider a more general setting where preferences are not strict and can have ties.

\bibliographystyle{plainnat}
\bibliography{reference}

\appendix

\section{Additional Proofs and Technical Lemmas}
\begin{lemma}[Property of independent subgaussian, Lemma 5.4 in \cite{lattimore2020bandit}]\label{lem:subgaussian}
Suppose that $X$ is $d$-subgaussian and $X_1$ and $X_2$ are independent and $d_1$ and $d_2$ subgaussian, respectively, then we have the following property:\\
    (1) $Var[X] \le d^2$. \\
    (2) $cX$ is |c|d-subgaussian for all $c \in \mathbbm{R}$. \\
    (3) $X_1 + X_2$ is $\sqrt{d_1^2 + d_2^2}$-subgaussian.
\end{lemma}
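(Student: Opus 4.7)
The plan is to establish all three properties from the moment-generating-function characterization of $d$-subgaussianity, namely the equivalent form $\mathbb{E}[\exp(\lambda X)] \le \exp(\lambda^2 d^2 / 2)$ for every $\lambda \in \mathbb{R}$. This is the form underlying the tail bound stated in the paper (and the form used in Lattimore and Szepesvári). A first consequence I would record is that any $d$-subgaussian $X$ has $\mathbb{E}[X] = 0$: applying the inequality with $\lambda$ and $-\lambda$ and taking $\lambda \to 0$ in $\log \mathbb{E}[\exp(\lambda X)] \le \lambda^2 d^2/2$ forces the linear term of the cumulant expansion to vanish. This zero-mean fact is then used throughout.

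Properties (2) and (3) I would dispose of by direct substitution. For (2), replace $\lambda$ by $\lambda c$ in the MGF bound to get $\mathbb{E}[\exp(\lambda\, cX)] = \mathbb{E}[\exp((\lambda c) X)] \le \exp((\lambda c)^2 d^2 / 2) = \exp(\lambda^2 (|c|d)^2 / 2)$, which is exactly the $|c|d$-subgaussian condition. For (3), independence of $X_1,X_2$ allows the joint MGF to factor: $\mathbb{E}[\exp(\lambda(X_1+X_2))] = \mathbb{E}[\exp(\lambda X_1)]\,\mathbb{E}[\exp(\lambda X_2)] \le \exp(\lambda^2 d_1^2/2)\exp(\lambda^2 d_2^2/2) = \exp(\lambda^2(d_1^2+d_2^2)/2)$, so $X_1+X_2$ is $\sqrt{d_1^2+d_2^2}$-subgaussian.

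For (1), I would Taylor-expand both sides of the MGF inequality around $\lambda = 0$. The right-hand side is $1 + \tfrac{\lambda^2 d^2}{2} + O(\lambda^4)$, and the left-hand side is $1 + \lambda\,\mathbb{E}[X] + \tfrac{\lambda^2}{2}\mathbb{E}[X^2] + O(\lambda^3)$. Using $\mathbb{E}[X] = 0$ and matching the $\lambda^2$ coefficients yields $\mathbb{E}[X^2] \le d^2$, whence $\mathrm{Var}[X] = \mathbb{E}[X^2] \le d^2$. Equivalently, I can write the bound directly as a limit, $\mathbb{E}[X^2] = \lim_{\lambda\to 0}\tfrac{2}{\lambda^2}\bigl(\mathbb{E}[\exp(\lambda X)]-1\bigr) \le \lim_{\lambda\to 0}\tfrac{2}{\lambda^2}\bigl(\exp(\lambda^2 d^2/2)-1\bigr) = d^2$, avoiding the explicit series.

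The only genuinely delicate step is (1), where I must justify interchanging expectation with derivatives at the origin in order to read moments off the MGF. This is standard for subgaussian $X$: the exponential tail ensures that the MGF is finite for all $\lambda$ and real-analytic in a neighborhood of $0$, so dominated convergence legitimizes the term-by-term expansion. Parts (2) and (3) are essentially one-line substitutions once the MGF form is in hand, so I do not expect any obstacle there.
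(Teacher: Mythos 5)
Your proof is correct. Note that the paper itself offers no proof of this lemma at all---it is imported verbatim by citation from Lattimore and Szepesv\'ari---so there is nothing to compare against except the textbook argument, which is exactly what you have reproduced: zero mean and the variance bound by matching the $\lambda$- and $\lambda^2$-coefficients of the cumulant/MGF expansion at the origin, scaling by substituting $\lambda c$ for $\lambda$, and additivity of the squared subgaussian constants via factorization of the MGF under independence. All three steps are sound, and your remark that analyticity of the MGF (guaranteed by the exponential tails) licenses the term-by-term expansion addresses the only point of rigor in part (1).

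One caveat worth flagging, though it is a defect of the paper rather than of your argument: the paper's footnote \emph{defines} $d$-subgaussian by the tail bound $P(|X|>t)\le 2\exp(-t^2/(2d^2))$, whereas your proof (and Lattimore--Szepesv\'ari's Lemma 5.4) uses the MGF condition $\mathbb{E}[\exp(\lambda X)]\le \exp(\lambda^2 d^2/2)$. These are equivalent only up to universal constants, and the tail condition alone does not force $\mathbb{E}[X]=0$, so under the paper's literal definition part (1) would not hold with the constant $d^2$ (one only gets $\mathbb{E}[X^2]\le 4d^2$ by integrating the tail) and part (3) would likewise degrade. You were right to work from the MGF characterization, since that is the hypothesis under which the cited lemma is actually true as stated; it would be worth making explicit that this, not the footnote's tail bound, is the operative definition.
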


\subsection{The Proof of \Cref{lem:technical}}

\lemtechnical*

\begin{proof}
    First we have the decomposition
    \begin{align}
        \sum_{l' = 1}^{l}\ceil*{ log_2(l' + 1)}  &= (l+1)\ceil*{\log_2(l+1)} \\
        &- \sum_{l' = 1}^{l}l'(\ceil*{\log_2(l'+1)} - \ceil*{\log_2(l')}),
    \end{align}
      and
    \begin{equation*}
      \ceil*{\log_2(l' + 1)} - \ceil*{\log_2(l')} = 1  
    \end{equation*}
    if $l'$ is a power of 2, and $0$ otherwise. 
    Then we simplify the second term of the decompostion 
    \begin{align*}
        \sum_{l' = 1}^{l}l'(\ceil*{\log_2(l'+1)} - \ceil*{\log_2(l')}) &= \sum_{l' = 1}^{l}l' \cdot \mathbf{1}\{\textit{$l'$ is a power of $2$}\} \\
        &= \sum_{t  = 0}^{\floor*{\log_2(l)}}2^t \\
        &=2^{\floor*{\log_2(l)} + 1} - 1.
    \end{align*}
    Substituting it back to the decomposition gets the result.
\end{proof}

\subsection{The proof of \Cref{thm:mm_regret} }
\label{append:proof_thm4}
\thmmmregret*

\begin{proof}
    Denote $R_l$ as the regret within epoch $l$, and denote $L$ as the number of epochs that starts within the $T$ turns.

    Since 
    \begin{align*}
        T &\geq \sum_{l = 1}^{L-1} 2^l \\
        &=2^L - 2,
    \end{align*}
    and so 
    \begin{equation}
        L \leq \log_{2}(T + 2)
        \label{eq:bound_L}
    \end{equation}

        Then by \Cref{lem:minimax_error_prob}, since the worst case regret is a constant at each time, we compute the upper bound of expected regrets of epoch $l > l_0$:
\begin{align*}
    R_l &\leq  N \ceil*{\log_2(l + 1)} + P(E_l) \cdot 2^l  \\
    &\leq  N \ceil*{\log_2(l + 1)}  + 4N^2(2\gamma)^l,
\end{align*}
where $\gamma = \exp(- \frac{c_1 log(l)\Gamma^2}{8})$.

Furthermore, for any $l > l_1 = max(\exp(\frac{8}{c_1 \Gamma^2}), l_0)$, we have $\gamma < \frac{1}{2}$ and so $2\gamma < 1$.
Thus, for any $l > l_1$, we have  
\begin{equation*}
    R_l \leq  N \ceil*{\log_2(l + 1)} + 4N^2.
\end{equation*}

Finally we compute the cumulative regret
\begin{align*}
    Reg^{MM}(T) &= \sum_{l = 1}^{L}R_l \\
    &\leq \sum_{l = 1}^{L} N \ceil*{\log_2(l + 1)} + \sum_{l = 1}^{l_1}2^l + \sum_{l = l_1 + 1}^{L} 4N^2\\
    &\leq  O(NL\log(L)) + 4N^2L + 2^{l_1 + 1}.
\end{align*}

Combining with \Cref{eq:bound_L}, we have 
\begin{equation*}
    Reg^{MM}(T) = O(N \log(T) \log(\log(T)) +  4N^2\log(T) + 2^{l_1})    
\end{equation*}
and the proof is complete.
\end{proof}

\section{Computing a Utilitarian-Optimal Stable Matching}
\label{sec:append_uti_opt}

In this section, we introduce the techniques that are omitted in Section 3. We introduce how to construct a sparse subgraph such that the subgraph keeps the closed subset.
Then we convert the problem that finds minimum-weight closed subset in the subgraph to finding a minimum cut in an $s-t$ flow graph.

\paragraph{\textbf{Constructing the sparse subgraph.}}
When we have the original graph where nodes denote the rotations, edges denote predecessor relations, and each node has an associated weight, we need to reconstruct a sparse subgraph where the node sets are the same, but partial original edges are kept.
%
Edges defined by two rules are added to the subgraph: (i) If $(a_i, b_j)$ is a member of a rotation $\rho_1$, and $b_k$ is the first arm below $b_j$ in $a_i$'s list such that $(a_i, b_k)$ is a member of another rotation $\rho_2$, then we have an edge from $\rho_1$ to $\rho_2$; (ii) If $(a_i, b_k)$ is not a member of any rotation, but is eliminated by a rotation $\rho_2$, and $b_j$ is the first arm above $b_k$ in $a_i$'s list such that $(a_i, b_j)$ is a member of another rotation $\rho_1$, then we have an edge from $\rho_1$ to $\rho_2$.
The key observation is that by such construction, the number of edges is bounded by $O(N^2)$, and the subgraph keeps the closed subsets of the original graph, i.e. a subset of nodes is closed in the original graph if and only if it is also closed in the subgraph.  

\paragraph{\textbf{Solving the induced $s-t$ maximim flow problem.}}
After getting the sparse subgraph, we can convert the original problem to solving a min-cut max-flow problem.
We add source node $s$, sink node $t$, and all rotation nodes to the $s-t$ flow graph.
A directed edge is added from $s$ to every positive node (node that has positive weight $w(\rho) > 0$); the capacity of the edge is $w(\rho)$.
A directed edge is added from a negative node (node that has negative weight $w(\rho) < 0$) to $t$; the capacity of the edge is $|w(\rho)|$.
The capacity of the original edge in the subgraph is set to be infinity.
The negative nodes whose edges into $t$ are uncut by the minimum cut in $s-t$ and their predecessors are the rotations that define the stable utilitarian-optimal match.

\section{Sample Complexity}

In this section, we introduce the sample complexity of finding an optimal stable solution (both utilitarian optimal and maximin) under the probably approximately correct (PAC) framework.
Firstly, we study the ETC algorithm combined with \Cref{alg:weighted_egalitarian}.
We study how many samples $T$ are needed to find the utilitarian-optimal stable matching given a fixed probability budget.

\begin{theorem}
    With probability at least $1 - \alpha$, the ETC with Utilitarian-Optimal algorithm finds the utilitarian-optimal stable matching with sample complexity $\frac{2N^2}{\beta^2}\log(\frac{4N^2}{\alpha})$.
\end{theorem}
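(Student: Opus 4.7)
The plan is to follow the same concentration-plus-union-bound template used in Lemma 6 (\Cref{lem:error_prob}) but applied once to a single long exploration phase (ETC rather than epoch ETC), and then to invert the resulting tail inequality into a sample-complexity statement. The starting point is \Cref{lem:condition_correct}: the matching returned by \Cref{alg:weighted_egalitarian} on the empirical profile $\hat{\mu}$ equals the utilitarian-optimal matching $m^{*}$ whenever, simultaneously, $|\hat{\mu}_{a_i,b_j}-\mu_{a_i,b_j}|<\beta$ and $|\hat{\mu}_{b_j,a_i}-\mu_{b_j,a_i}|<\beta$ for every pair $(i,j)$, where $\beta=\min\{\delta/(4N),\Delta^a/2,\Delta^b/2\}$. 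Thus bounding the failure probability reduces to bounding the probability that any of the $2N^{2}$ empirical means deviates from its mean by at least $\beta$.

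Next I would set up the counting. In a round-robin exploration phase of length $Nn$ rounds, every agent pulls every arm exactly $n$ times, so each of the $2N^{2}$ empirical means is an average of $n$ i.i.d.\ $1$-subgaussian samples; the total sample complexity (total number of pulls across all agents) is $T=N\cdot Nn=N^{2}n$. By the standard $1$-subgaussian tail bound, $P(|\hat{\mu}-\mu|\ge\beta)\le 2\exp(-n\beta^{2}/2)$, and a union bound over the $2N^{2}$ estimates gives
\begin{equation*}
P(\hat{m}\neq m^{*})\;\le\;4N^{2}\exp\!\bigl(-n\beta^{2}/2\bigr).
\end{equation*}

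Setting the right-hand side to $\alpha$ and solving for $n$ yields $n\ge\frac{2}{\beta^{2}}\log\!\bigl(\frac{4N^{2}}{\alpha}\bigr)$, so the required total sample complexity is
\begin{equation*}
T \;=\; N^{2}n \;\ge\; \frac{2N^{2}}{\beta^{2}}\log\!\Bigl(\frac{4N^{2}}{\alpha}\Bigr),
\end{equation*}
which is exactly the claimed bound. The proof then concludes by noting that after this exploration phase, invoking \Cref{alg:weighted_egalitarian} on $\hat{\mu}$ returns $m^{*}$ with probability at least $1-\alpha$.

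The argument is essentially a simplification of the epoch analysis, so I do not anticipate a genuine obstacle: the only point that requires a little care is the bookkeeping between the per-pair sample count $n$, the number of round-robin rounds $Nn$, and the "sample complexity" reported in the statement (total pulls $N^{2}n$ across both sides of the market), together with the observation that each pull by $a_i$ of $b_j$ simultaneously updates both $\hat{\mu}_{a_i,b_j}$ and $\hat{\mu}_{b_j,a_i}$, which is why a single factor of $n$ suffices for all $2N^{2}$ estimates and why the $2N^{2}$ pairs enter only through the union bound (the constant $4$ in $\log(4N^{2}/\alpha)$).
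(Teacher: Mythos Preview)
Your proposal is correct and follows essentially the same approach as the paper's own proof: both invoke \Cref{lem:condition_correct} to reduce correctness to per-pair accuracy within $\beta$, apply the $1$-subgaussian tail bound $P(|\hat{\mu}-\mu|\ge\beta)\le 2\exp(-n\beta^{2}/2)$, union-bound over the $2N^{2}$ estimates, and invert to obtain $n=\frac{2}{\beta^{2}}\log\frac{4N^{2}}{\alpha}$ and hence $T=N^{2}n$. The only cosmetic difference is that the paper splits the union bound into the agent side and the arm side separately before combining, whereas you union-bound over all $2N^{2}$ estimates at once; the arithmetic and conclusion are identical.
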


\begin{proof}
Since agents pull arms uniformly, we assume each agent samples each arm $t$ times.
By \Cref{lem:subgaussian} and the definition of subgaussian, with probability at least $1 - 2 exp(-\frac{\beta^2 t}{2})$, we have that
\begin{equation*}
    |\hat{\mu}^a_{i,j} - \mu^a_{i,j}| \leq \beta.
\end{equation*}
Then by a union bound, with probability at least $1 - 2N^2exp(-\frac{\beta^2 t}{2})$, we have $|\hat{\mu}^a_{i,j} - \mu^a_{i,j}| \leq \beta$ for any pair of $i,j$.
In symmetry, since $N=K$ at the same time we also have $|\hat{\mu}^b_{j,i} - \mu^b_{j,i}| \leq \beta$ for any pair of $j,i$.
Therefore, by \Cref{lem:condition_correct}, with probability at least $1 - 4N^2exp(-\frac{\beta^2 t}{2})$, we have that $\hat{m} = m^*$, where $m^*$ is the utilitarian-optimal stable matching.  
By setting $\alpha = 4N^2exp(-\frac{\beta^2 t}{2})$, we have $t = \frac{2}{\beta^2}\log(\frac{4N^2}{\alpha})$.
Therefore, with probability at least $1 - \alpha$, the ETC algorithm needs the total samples of the size $T = \frac{2N^2}{\beta^2}\log(\frac{4N^2}{\alpha})$ to find the utilitarian-optimal stable matching.
\end{proof}

Similarly, we have the following result for finding a maximin stable matching with ETC algorithm.

\begin{theorem}
    With probability at least $1 - \alpha$, the ETC with maximin optimal algorithm finds the maximin stable matching with sample complexity $\frac{8N^2}{\Gamma^2}\log(\frac{4N^2}{\alpha})$.
\end{theorem}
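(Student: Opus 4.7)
The plan is to mirror the proof of the preceding utilitarian sample-complexity theorem, substituting the tighter sufficient accuracy for the maximin case (namely $\Gamma/2$, as prescribed by \Cref{lem:minimax_condition_correct}) and reusing sub-Gaussian concentration together with a union bound.

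First I would set up the estimator. Under ETC, agents pull arms in the round-robin way of \Cref{alg:equalExplore-minimax}; after a common budget of $t$ samples per (agent, arm) pair, each $\hat{\mu}_{a_i,b_j}$ and $\hat{\mu}_{b_j,a_i}$ is an empirical mean of $t$ i.i.d. $1$-subgaussian observations, so $\hat{\mu}-\mu$ is $\tfrac{1}{\sqrt{t}}$-subgaussian with mean zero by \Cref{lem:subgaussian}. The subgaussian tail bound then yields
\begin{equation*}
\Pr\!\left[\,|\hat{\mu}_{a_i,b_j}-\mu_{a_i,b_j}|\geq \Gamma/2\,\right]\leq 2\exp\!\left(-\frac{\Gamma^2 t}{8}\right),
\end{equation*}
and an identical bound on the arm side.

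Next I would take a union bound over all $N^2$ agent-arm pairs on each side, giving a failure probability of at most $4N^{2}\exp(-\Gamma^{2}t/8)$ that some estimate deviates by $\Gamma/2$ or more. On the complementary event, every estimation error is strictly less than $\Gamma/2$, so the hypothesis of \Cref{lem:minimax_condition_correct} holds and the matching $\hat{m}$ returned by \Cref{alg:weighted_maximin} applied to $\hat{\mu}$ satisfies $R(\hat{m})=\max_{m\in \mathcal{M}}R(m)$, i.e.\ it is a maximin stable matching. Setting $4N^{2}\exp(-\Gamma^{2}t/8)=\alpha$ and solving for $t$ gives the per-pair requirement $t=\frac{8}{\Gamma^{2}}\log\!\bigl(\tfrac{4N^{2}}{\alpha}\bigr)$. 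Since each of the $N^{2}$ agent-arm pairs must be sampled $t$ times, the total sample complexity is $T=N^{2}t=\frac{8N^{2}}{\Gamma^{2}}\log\!\bigl(\tfrac{4N^{2}}{\alpha}\bigr)$, matching the theorem.

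There is essentially no nontrivial obstacle here; the proof is a direct mirror of the utilitarian sample-complexity argument with two bookkeeping changes: (i) the required precision is $\Gamma/2$ rather than $\beta=\min\{\delta/(4N),\Delta^a/2,\Delta^b/2\}$, which replaces $\beta^{2}/2$ in the exponent by $\Gamma^{2}/8$ (the factor of $4$ difference arising because $\Gamma/2$ is squared, not $\Gamma$); and (ii) correctness of the output matching is inherited from \Cref{lem:minimax_condition_correct} rather than \Cref{lem:condition_correct}. The only mild subtlety worth stating explicitly is that \Cref{lem:minimax_condition_correct} also ensures stability of $\hat{m}$ with respect to the true $\mu$ (via \Cref{lem:stable}, since $\Gamma\leq \min\{\Delta^{a},\Delta^{b}\}$), so on the good event $\hat{m}$ is simultaneously stable and maximin-optimal; this is what justifies applying \Cref{alg:weighted_maximin} to $\hat{\mu}$ and returning its output as a valid maximin stable matching for $\mu$.
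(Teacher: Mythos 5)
Your proposal is correct and follows essentially the same route as the paper's proof: sub-Gaussian concentration at precision $\Gamma/2$, a union bound over the $2N^2$ estimates giving failure probability $4N^2\exp(-\Gamma^2 t/8)$, an appeal to \Cref{lem:minimax_condition_correct} on the good event, and solving for $t$ before multiplying by $N^2$ pairs. Your added remark that $\Gamma \leq \min\{\Delta^a,\Delta^b\}$ guarantees stability via \Cref{lem:stable} is a correct and slightly more explicit justification than the paper provides.
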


\begin{proof}
Assume each agent samples each arm $t$ times.
By \Cref{lem:subgaussian} and the definition of subgaussian, with probability at least $1 - 2 exp(-\frac{\Gamma^2 t}{8})$, we have that
\begin{equation*}
    |\hat{\mu}^a_{i,j} - \mu^a_{i,j}| \leq \Gamma/2.
\end{equation*}
Then by a union bound, with probability at least $1 - 2N^2exp(-\frac{\Gamma^2 t}{8})$, we have $|\hat{\mu}^a_{i,j} - \mu^a_{i,j}| \leq \Gamma/2$ for any pair of $i,j$.
We also have $|\hat{\mu}^b_{j,i} - \mu^b_{j,i}| \leq \Gamma/2$ for any pair of $j,i$.
Therefore, by \Cref{lem:minimax_condition_correct}, with probability at least $1 - 4N^2exp(-\frac{\Gamma^2 t}{8})$, we have that $\hat{m} = m^*$, where $m^*$ is the utilitarian-optimal stable matching.  
By setting $\alpha = 4N^2exp(-\frac{\Gamma^2 t}{8})$, we have $t = \frac{8}{\Gamma^2}\log(\frac{4N^2}{\alpha})$.
Therefore, with probability at least $1 - \alpha$, the ETC algorithm needs the total samples of the size $T = \frac{8N^2}{\Gamma^2}\log(\frac{4N^2}{\alpha})$ to find the utilitarian-optimal stable matching.
\end{proof}

\end{document}